\def\eqref#1{equation~\ref{#1}}
\def\1{\bm{1}}
\DeclareMathAlphabet{\mathsfit}{\encodingdefault}{\sfdefault}{m}{sl}
\SetMathAlphabet{\mathsfit}{bold}{\encodingdefault}{\sfdefault}{bx}{n}
\newtheorem{theorem}{Theorem}
\newcommand{\methodname}{COMiX}
\title{\methodname: Compositional Explanations using\\ Prototypes }
\author{%
  \makebox[\textwidth][c]{%
    Sarath Sivaprasad$^1$ \quad Dmitry Kangin$^2$ \quad Plamen Angelov$^2$ \quad Mario Fritz$^1$%
  }\\
  \makebox[\textwidth][c]{%
    $^1$CISPA Helmholtz Center for Information Security \quad $^2$LIRA Centre, Lancaster University, UK%
  } \\
  \makebox[\textwidth][c]{%
    \small{\texttt{\{sarath.sivaprasad, fritz\}@cispa.de}} \quad 
    \small{\texttt{\{d.kangin1, p.angelov\}@lancaster.ac.uk}}%
  }
}
\begin{document}

\maketitle

\begin{abstract}
Aligning machine representations with human understanding is key to improving interpretability of machine learning (ML) models. 
When classifying a new image, humans often explain their decisions by decomposing the image into concepts and pointing to corresponding regions in familiar images.
Current ML explanation techniques typically either trace decision-making processes to reference prototypes, generate attribution maps highlighting feature importance, or incorporate intermediate bottlenecks designed to align with human-interpretable concepts.
The proposed method, named \methodname, classifies an image by decomposing it into regions based on learned concepts and tracing each region to corresponding ones in images from the training dataset, assuring that explanations fully represent the actual decision-making process.
We dissect the test image into selected internal representations of a neural network to derive prototypical parts (primitives) and match them with the corresponding primitives derived from the training data. 
In a series of qualitative and quantitative experiments, we theoretically prove and demonstrate that our method, in contrast to \textit{post hoc} analysis, provides fidelity of explanations and shows that the efficiency is competitive with other inherently interpretable architectures. Notably, it shows substantial improvements in fidelity and sparsity metrics, including $48.82\%$ improvement in the C-insertion score on the ImageNet dataset over the best state-of-the-art baseline. 
\end{abstract}

\section{Introduction}

Neural networks (NNs) have been successfully applied across various computer vision tasks, achieving notable results in safety-critical domains such as medical image classification  \citep{huang2023self}, autonomous driving \citep{geiger2012we}, and robotics \citep{robinson2023robotic} amongst others. 
However, explaining their decisions remains an ongoing research challenge \citep{samek2021explaining}.

The two key factors in interpreting neural network decisions are: (1) representing the reasoning behind the prediction in human-understandable terms and (2) ensuring that the explanations accurately reflect the underlying computations of the neural network.
Beyond their face value, such interpretations can also help meet the legal requirements. 
The recently adopted EU AI Act \citep{EUAIAct} mandates that individuals should fully understand high-risk AI systems, enabling them to monitor these systems effectively, specifically requiring the ability to \textit{`correctly interpret the high-risk AI system's output'}.

Most existing explanation methods address this problem using attribution-based techniques, which highlight the parts of the input that contribute to a particular decision  \citep{selvaraju2017grad,chattopadhay2018grad,omeiza2019smooth}. However, these methods lack reliability as their explanations have been shown to be sensitive to factors which do not contribute to the model prediction \citep{kindermans2019reliability}.
To address this issue, concept- and prototype-based explanations have been proposed, which aim to link the decision to examples that illustrate the underlying concepts (\cite{kim2018interpretability,ghorbani2019towards,koh2020concept,tan2024post}). Nevertheless, such explanations have also been demonstrated to be insufficient for human understanding as they do not point to the reasons why the input is linked to the associated concept prototypes \citep{kim2016examples}. 

Studies of human understanding show that concepts can be decomposed into smaller constituents representing particular properties. These subconcepts can then be exemplified by the individual instances called \textit{prototypes} \citep{murphy2004big}. 
In this work, we propose a concept-based interpretable-by-design method, which highlights common class-defining features between the input image and the samples in the training dataset. This approach goes beyond attribution map predictions and presents a model, by design, that traces the decision to the original training data. Such decision-making process can be motivated by a number of safety-critical applications, for example, medical data analysis, where a doctor wants to find out the aspects that make this image similar to the previous ones.

\begin{figure}
\centering
    \includegraphics[width=0.75\linewidth]{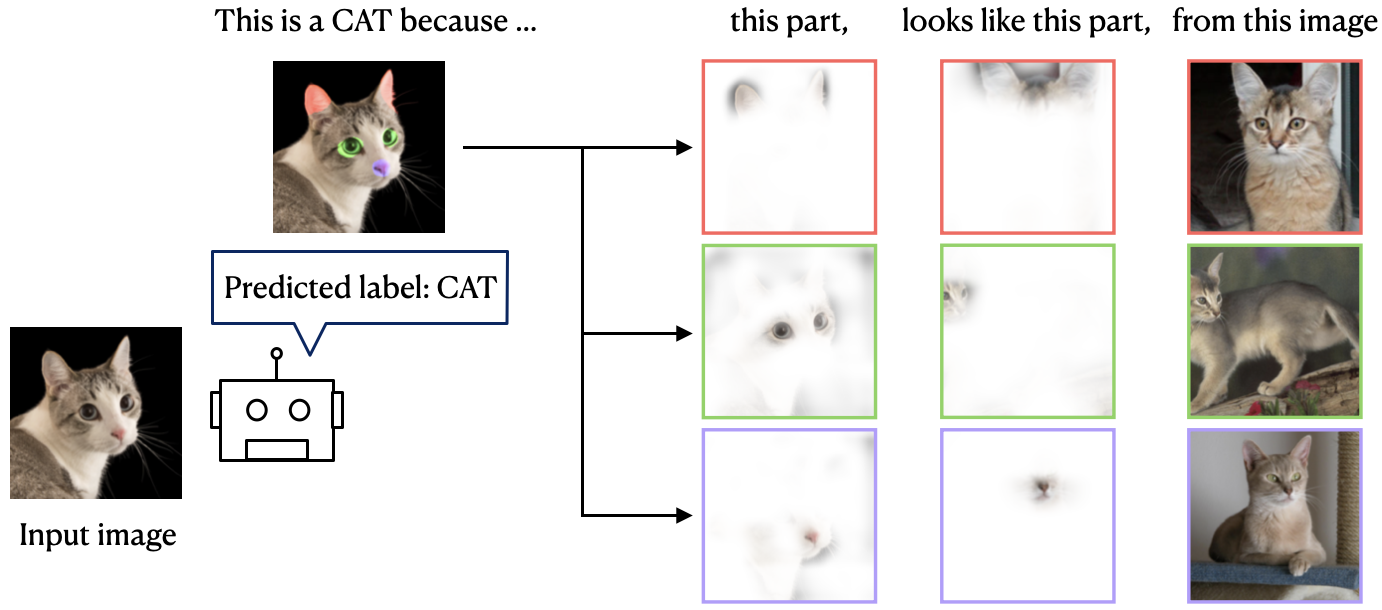}
    \caption{Humans often make sense of new or complex objects by comparing their parts to previously encountered prototypes (\cite{smith1974structure}). For example, when describing something unfamiliar, people tend to point out resemblances between parts of the new object and familiar prototypes by stating that ‘this part of the object looks like that other one I have seen before’. We propose a method to classify an image by decomposing it into regions based on learned concepts and tracing each region to the corresponding regions in images from training datasets. We refer to such interpretations as to \textit{'\methodname\ panels'}}
 \vspace{-0.4cm}
    \label{teaser}
\end{figure}

We illustrate the idea of the proposed method, called \methodname, in Figure \ref{teaser}. For every test sample, we predict the output by linking them to a set of features in the training data. This link, by design, provides interpretations through the relationship between the testing image and the samples from the training set. This idea also extends to counterfactual interpretations, which demonstrate how the test sample relates to the classes that the model did not predict. It can also address the diagnostics of the misclassification cases, attributing the misclassification to the training data conditioned on class-defining features. We follow the convention from \cite{rudin2019stop} which contrasts \textit{post hoc} explainability with \textit{ante hoc} interpretability. \methodname\ is not \textit{post hoc} and the interpretability comes from the decision-making. We formulate the following desiderata and demonstrate, in sections \ref{desiderata_demonstration} and \ref{eval_interpretability} how \methodname\ meets the demands of: 
 \vspace{-0.2cm}
\begin{itemize}
    \item \textbf{Fidelity}: The method should faithfully and wholly reflect the decision-making procedure, which is achieved by-design.  \vspace{-0.1cm}
    \item \textbf{Sparsity}: For meaningful interpretation, the given class should activate only a handful of concepts. We enforce sparsity by restricting the decision-making to class-defining features. We also measure sparsity in Section \ref{eval_interpretability} against the standard ViT \citep{dosovitskiy2021an} baseline. \vspace{-0.1cm}
    \item \textbf{Necessity}: The concept is important for making the decision and its presence in the input is necessary. We evaluate this using the causal matrices \citep{petsiuk2018rise} in Section~\ref{eval_interpretability}. \vspace{-0.1cm}
    \item \textbf{Sufficiency}: The concept presence in the input is sufficient for making the given decision. We present the proof in Section~~\ref{desiderata_demonstration}
\end{itemize}

The contributions of our paper are as follows:
\begin{itemize}
\item We propose a novel method, called \methodname, which reliably points prototypical regions in a testing image and matches them to regions in training images.
\item Based on this method, we demonstrate how this method can be built upon existing inherently interpretable architectures with an additional value of concept discovery.
\item We demonstrate, in a number of settings, the efficiency of \methodname\ through a series of qualitative and quantitative experiments, showing the advantages of the method over existing baselines in terms of fidelity and sparsity.
\end{itemize}

\section{Related work}

\paragraph {Explainable and interpretable AI.}
The early methods for neural network \textit{post hoc} explanations, such as the work by \cite{simonyan2013deep} and Grad-CAM (\cite{selvaraju2017grad}), were grounded in the idea of differentiating through the model. Other important backpropagation-based models include \cite{bach2015pixel,sundararajan2017axiomatic}. Perturbation-based methods, such as \cite{ribeiro2016should,lundberg2017unified,petsiuk2018rise,vstrumbelj2014explaining}, use perturbations to figure out input features' contributions. However, such a line of research is limited in its ability to capture the true inner workings of the original model \citep{rudin2019stop}. To address this concern, a number of by-design interpretable machine learning models have been proposed, presenting the interpretable architectures (\cite{bohle2022b,bohle2024b}), concept-bottleneck models (\cite{koh2020concept,shin2023closer,schrodi2024concept,losch2019interpretability,qian2022static}) and prototype-based interpretations \citep{chen2019looks,donnelly2022deformable,angelov2020towards}. 
\cite{fel2023craft} tackles a similar problem to the one in this paper: first, automatic extraction of concepts and then highlighting the similarities between such concepts and the testing image. However, the main conceptual difference between \cite{fel2023craft} and \methodname\ is that this work aims for by-design explanation of the decision-making while \cite{fel2023craft} addresses the problem of \textit{post hoc} analysis. %
In contrast to these works, the described method is both inherently interpretable and offers interpretation through the training data. 

\paragraph{Concept discovery.} 
Closely related to the studied problem interpretation is the challenge of concept discovery, motivated by the neuroscience studies in human reasoning \citep{bruner1957study}. \cite{kim2018interpretability} proposed a paradigm of concept activation vectors. Another study by \cite{ghorbani2019towards} proposes extracting visual concepts through segmentation.
Concept bottleneck models \citep{koh2020concept,shang2024incremental,sheth2024auxiliary,havasi2022addressing} introduce constraints into training so that the classifier is limited to using human-understandable features.
Similar to these models, \methodname\ also leverages concept discovery, where the concepts are individual interpretable classifier features.  On the contrary, we do not constrain the classifier to learn the human-understandable features and instead project the learned features into human-understandable space. In addition, \methodname\ traces these concepts back to the training data and provides inherent, by-design, interpretations, which have not, to the best of our knowledge, provided in the existing literature. 
ProtoPNet method (\cite{chen2019looks}) is a well-known baseline for concept discovery through patch prototypes. It has been further developed in a number of works such as \cite{donnelly2022deformable,ma2024looks,sacha2023protoseg,hase2019interpretable}. \cite{tan2024post} propose to combine \textit{post hoc} explainability methods with transparent concept-based reasoning. 
\cite{bontempelli2022concept} analyses the problem of attainment of confounders within ProtoPNet and addresses it with human-in-the-loop model debugging.

\paragraph{Evaluation of interpretability.}
\cite{hesse2023funnybirds} propose a synthetic dataset and benchmark for part-level analysis of explainable models for image classification. \cite{fel2024holistic} propose a set of metrics for explainable AI which assesses the quality of attribution-based explanations. They use the Insertion and Deletion metrics from \cite{petsiuk2018rise} for attribution assessment.
Important desiderata for concept extraction include sparsity of the outputs: not only do these outputs need to faithfully reflect the decision-making, but only a handful of concepts need to be activated for every testing image. To measure this ability, we leverage the metrics from the sparsity literature. \cite{diao2022pruning} propose a new PQ index metric, which measures the representation sparsity. 
One of the aspects, however, is that most of these metrics target the problem of attribution-based explanations. In our case, however, we combine concept-based and inherent attribution-based explanations, which allows us to evaluate the results using both C-insertion and C-deletion as well as the sparsity of concepts.

\section{Compositional explanations using \methodname}

\begin{figure}
\centering
  \includegraphics[width=\linewidth]{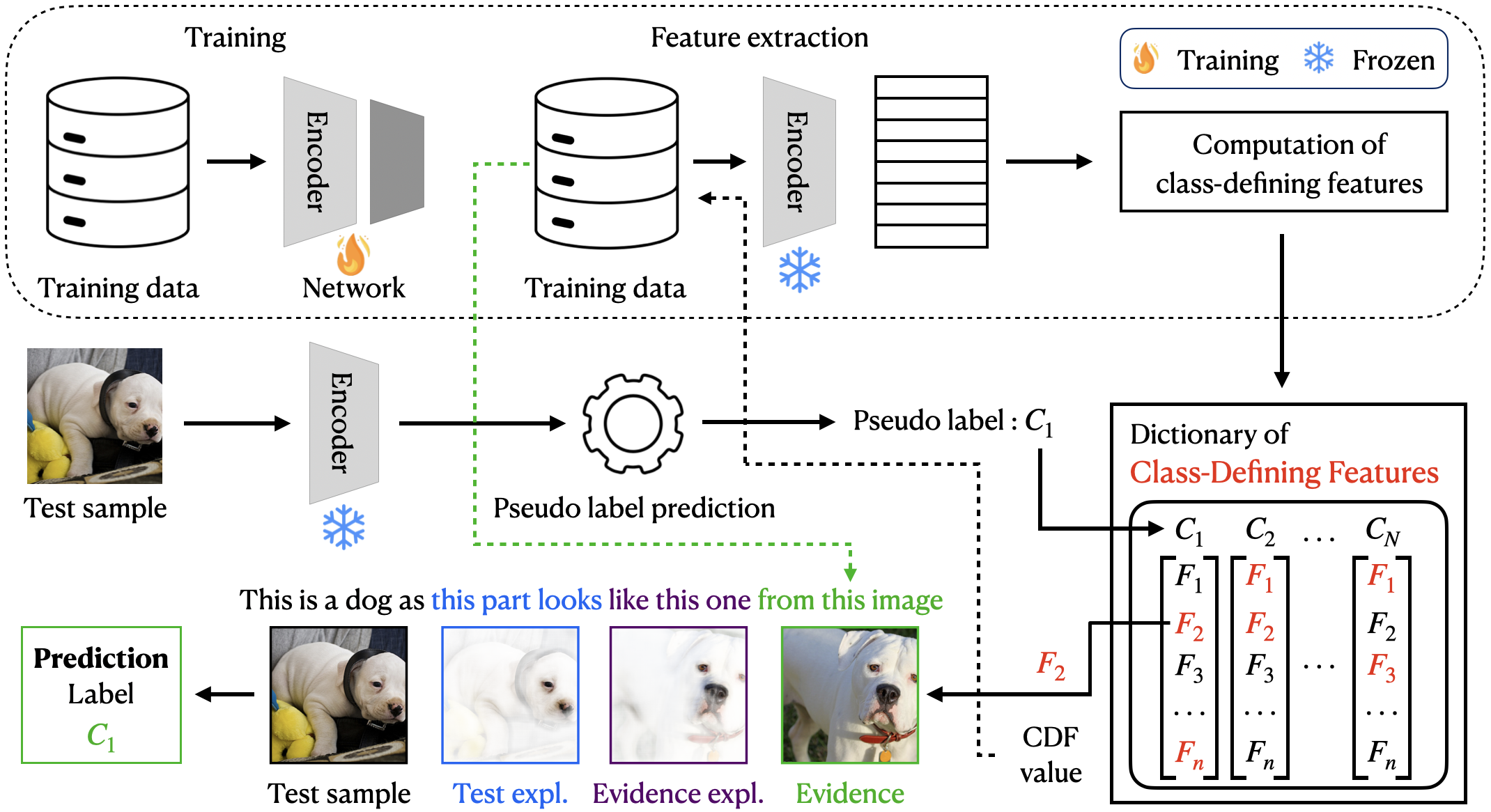}
    \caption{\methodname\ method overview. }
    \label{method_overview}
    \vspace{-0.4cm}
\end{figure}

An overview of \methodname\ is presented in Figure \ref{method_overview}. The figure demonstrates an example where a single Class Defining Feature (CDF) is used for prediction. For every test image, the final decision-making step aligns with human-interpretable reasoning: \textit{'This image is classified as a dog because this region of the image resembles the corresponding region of this training image'}. This explanation fully corresponds to the underlying computations, providing a faithful and complete representation of the decision process, i.e. not an approximation of the computation. We train a $\operatorname{B-cos}$ network, an inherently interpretable model, on the training data. Using the train features from this encoder, we compute the CDFs. During inference, we project the test image into the CDF space using a pseudo-label. For each CDF feature, we retrieve the closest matching training data point. Projecting the CDF features into image space allows us to localize the prototypical regions in the test image that correspond to the training data. The final prediction is obtained through majority voting of the labels assigned by each CDF feature.

\subsection{Preliminaries: $\operatorname{B-Cos}$ architecture}

A $\operatorname{B-cos}$ encoder generates a reliable explanation of its computation. $\operatorname{B-cos}$ networks are neural networks in which all the linear layers (along with activations) are replaced by $\operatorname{B-cos}$ layers. For more details on the formulation and training of these networks, we refer the reader to \cite{bohle2022b, bohle2024b} and to Appendix \ref{sec:Appendix_BCos_description}. Operation of a $\operatorname{B-cos}$ layer at a node for an input $\mathbf{x}$ and weights $\mathbf{w}$ leading to the node is given by
\vspace{-0.2cm}
\begin{equation}
\operatorname{B\text{-}cos}(\mathbf{x}; \mathbf{w}) = \|\mathbf{x}\| \cdot \|\mathbf{w}\| \cdot \left|\cos(\angle(\mathbf{x}, \mathbf{w}))\right|^{B}  \cdot \operatorname{sign}\left(\cos(\angle(\mathbf{x}, \mathbf{w}))\right),
\label{Bcos-equ} 
\end{equation}

where $B$ is a hyper-parameter that influences the extent to which alignment between $\mathbf{x}$ and $\mathbf{w}$ contributes to the magnitude of the output. Replacing linear layers with $\operatorname{B-cos}$ layers removes the need for other explicit non-linearity while training the network. Given an input, $\operatorname{B-cos}$ layer becomes a linear layer followed by a scalar multiplication (the cosine score: Equation \ref{Bcos-equ}). As each layer becomes a linear operation, the neural network collapses into a single linear transform that faithfully summarises the entire model computations. Moreover, the $\operatorname{B-cos}$ layers introduce alignment pressure on their weights during optimization. For the output of a node to be high, the input must align well with the node’s incoming parameters, indicated by a high value of $\cos(\angle(\mathbf{x},\mathbf{w}))$. In short, we choose the $\operatorname{B-cos}$ network for two reasons: (a) $\operatorname{B-cos}$ has an input-dependent non-linearity which collapses the encoder computations into a linear transformation for a test sample, and (b) the collapsed linear operation (i.e., matrix) is aligned to the input sample when the output is high.

Given an input image $\mathbf{x}$, $(L+1)$-layer $\operatorname{B-cos}$ network collapses into a linear layer. This matrix is aligned with the input if the output is high. The $(L+1)$-layer transformation can be presented as a shortcut representation
\begin{equation}
    W_{1\rightarrow (L+1)}(\mathbf{x};\theta) = W_{(L+1)}\circ W_{L}... \circ W_1 (\mathbf{x}; \theta),
    \label{1_L_transformation}
\end{equation}

The final output is obtained as 
\begin{equation}
    f(\mathbf{x}; \theta) =  W_{1\rightarrow (L+1)} (\mathbf{x};\theta)\mathbf{x},
\label{final_output}
\end{equation}

We modify the previous formulation of $\operatorname{B-cos}$ to get the explanation of features that are activated by the input. Previous work has also shown that $\operatorname{B-cos}$ transformers inherently learn human-interpretable features. We compute the explanation for a feature $i$ in the $L^{\text{th}}$ layer as ${W_{1\rightarrow L}(\mathbf{x}; \theta)}^i$.

\subsection{Compositional Explanations using Prototypes (\methodname)}

We present the complete methodology in Algorithm \ref{alg:one}. Hereafter $\arg_x \mathrm{top}_k [\cdot]$ denotes the generalisation of the $\arg \max_x [\cdot]$ operator where the maximum is replaced with top $k$ values. The algorithm starts (\textbf{Step 1}) with calculation of embeddings $W_{1\rightarrow L} (\mathbf{x}; \theta)$ (Encoder stage in Figure \ref{method_overview}). It proceeds with the pseudo-label prediction (\textbf{Step 2}), and the selection of the CDFs for a given pseudo-label (\textbf{Step 3}). The per-feature predictions are calculated from the CDFs for top $M$ CDFs for the pseudo-label and $K$ nearest neighbours (\textbf{Step 4}).  In \textbf{Step 5}, we calculate the corresponding explanations. It is important to see that instead of one label the method gives a number of predictions, one per every feature and per every nearest neighbour. Further in the experimental section, we calculate the aggregated prediction as a mode of the prediction set $G(\mathbf{x}; \theta)$.

\begin{algorithm}[h]
\setstretch{1.35}
\caption{Compositionally explainable classifier \methodname\ }\label{alg:one}
\vspace{0.3cm}
\KwData{ Image $\mathbf{x}$; training dataset $\mathcal{D}$; \newline  class-defining features $P^C = \{P^c\ \forall c \in \mathbb{C}\}$; a number of features $M$ to be explained; \newline a number of nearest neighbours $K$}
\KwResult{$M \times K$ per-feature predictions $G (\mathbf{x}, \theta) = \{g^i,j(\mathbf{x}; \theta)\}_{i\in [1\ldots P^C], j\in [1, K]}$;\newline explanations $E(\mathbf{x}; \mathcal{D}, P^C)$ for retrieved concepts  }
\vspace{0.3cm}

\begin{enumerate}
    \item Calculate $W_{1\rightarrow L} (\mathbf{x}; \theta)$ as per Equation (\ref{1_L_transformation})
    \item Predict the nearest-neighbour pseudo-label class using Equation (\ref{pseudo_label_prediction})
    \item {Using the pseudo-label, select the top $M$ scalar class-defining features $P^{\tilde{g} (\mathbf{x}; \theta)}$  (see Equation (\ref{pseudo_label_mutual_information})}
    \item{Calculate the per-feature predictions $G (\mathbf{x}; \theta, P^C)$ from the class-defining features $P^{\tilde{g} (\mathbf{x}; \theta)}$ according to Equations (\ref{final_label_computation}) and (\ref{d_computation})}
    \item { Calculate the explanations for the $K$ nearest neighbours for every class-defining feature according to Equation \ref{equation_explanations}}
\end{enumerate}
\end{algorithm}

We define a \textit{training dataset} $\mathcal{D} = \{\mathbf{d}_1, \mathbf{d}_2, \ldots \mathbf{d}_n\}$ which contains a set of reference image samples, annotated by the labels  $\mathcal{L} = \{\mathbf{l}_1, \mathbf{l}_2, \ldots \mathbf{l}_n\}$ from a label set $\mathbb{C}$ as $l(\mathbf{d}_i) = \mathbf{l}_{i} \ \forall i \in [1, n]$.

We focus our experiments on the final layer and analyse its properties through the lens of transformation $W_{1\rightarrow L}$, which has shape $C_{L} \times  (W\cdot H\cdot D)$. Here $C_{L}$ is the number of features in the last layer ($L^{\text{th}}$ layer). The per-feature attribution explanations for a given input $\mathbf{x}$ is given by $\textbf{s}^\cdot_{L}  (\mathbf{x}; \theta)$ defined as follows:
\begin{equation}
    W_{1\rightarrow L} (\mathbf{x}; \theta) = \left(
    \textbf{s}^1_{L}  (\mathbf{x}; \theta),
    \textbf{s}^2_{L}  (\mathbf{x}; \theta),
    \ldots ,
    \textbf{s}^{C_{L}}_{L}  (\mathbf{x}; \theta)\\
    \right)^T,
\end{equation}

\vspace{-0.3cm}
\paragraph{ Step 2} uses the following equation to compute the pseudo-label class:
\begin{equation}
\tilde{g} (\mathbf{x}; \theta) = l(\arg \min_{\mathbf{d}} \{\ell^2 (W_{1\rightarrow L} (\mathbf{x}; \theta) \mathbf{x}, W_{1\rightarrow L}(\mathbf{d}; \theta)\mathbf{d})\ \forall \mathbf{d} \in \mathcal{D}\}) 
\label{pseudo_label_prediction}
\end{equation}

\vspace{-0.3cm}
\paragraph{In Step 3,} for the dataset $\mathcal{D}$, we calculate the top $M$ scalar \textit{class-defining features} {$P^{c}$} for class $c\in \mathbb{C}$ by using maximum mutual information:
\begin{equation}
    F = \{W_{1\rightarrow L} (\mathbf{d}, \theta) \mathbf{d} \ \ \forall \mathbf{d} \in \mathcal{D}\},
    F_j = \{\textbf{s}^j_{L} (\mathbf{d}, \theta) \mathbf{d} \ \ \forall \mathbf{d} \in \mathcal{D}\},
\end{equation}
\begin{equation}
    P^{c} =\{\arg_j \mathrm{top}_M  I (F_j, l (F_j) = c)\ \forall j \in [1\ldots C_{L}]\},
    \label{pseudo_label_mutual_information}
\end{equation}
where $c\in \mathbb{C}$ is a label for class $c$, $l(F_j)$ is a ground-truth label operator for the feature $F_j$, and the mutual information $I (X, Y)$ is defined as 
\begin{equation}
    I(X, Y) = \sum_{\left<x, y\right> \in \left<X, Y\right>} p(x, y) \log \left(\frac{p(x, y)}{p(x) p(y)}\right).
\end{equation}

The introduction of pseudo-labels is necessary for the selection of a small number of CDFs and therefore restricting the explanation to a small number of features. They constitute the initialisation for the decision-making process, which allows bootstrapping the selection of class-defining features. 

\vspace{-0.3cm}
\paragraph{Step 4} calculates the per-feature predictions $G (\mathbf{x}; \theta, P^C)$ through the following equations: 
\begin{equation} G (\mathbf{x}; \theta, P^C) = l(\mathbf{D}^*(\mathbf{x}, \theta, P^C)),
\label{final_label_computation}
\end{equation}
\begin{equation}
\mathbf{D}^* (\mathbf{x}, \theta, P^C) = \{\arg_{\mathbf{d}} {\mathrm{top}_K} \{-\ell^2 ([W_{1\rightarrow L} (\mathbf{x}; \theta) \mathbf{x}]_{f}, [W_{1\rightarrow L}(\mathbf{d}; \theta)\mathbf{d}]_{f})\ \forall \mathbf{d} \in \mathcal{D}\}\}_{f \in {P^{\tilde{g} (\mathbf{x}; \theta)}}}
\label{d_computation}
\end{equation}

\vspace{-0.3cm}
\paragraph{In Step 5,}  explanations for the CDF are calculated using the following equations: 
\begin{equation}
E(\mathbf{x}; \mathcal{D}, P^{C}) = E(\mathbf{x}; \mathcal{D}, P^{\tilde{g} (\mathbf{x}; \theta)}) = \{\left<s^i_{L} (\mathbf{x}_i, \theta), s^i_{L} (\mathbf{d}_i^{\mathrm{nearest}}, \theta)\right>\ \forall i \in P^{\tilde{g} (\mathbf{x}; \theta)}\},
\label{equation_explanations}
\end{equation} 
where the training samples' features, nearest to a class-defining feature of the testing image $\mathbf{x}$, are calculated as $ \mathbf{d}_i^{\mathrm{nearest}} = \arg_\mathbf{d} \textrm{top}_K \{\ell^2 ((W_{1\rightarrow L} (\mathbf{d};\theta)\mathbf{d})_i, (W_{1\rightarrow L} (\mathbf{x};\theta)\mathbf{x})_i),  \forall \mathbf{d} \in \mathbf{D}^*\}$ and $s^i_{L} (\mathbf{d}, \theta)$ is $i$-th row of $W_{1\rightarrow L} (\mathbf{d}, \theta)\}$.

\subsection{Demonstration of meeting the desiderata}
\label{desiderata_demonstration}
We define the criterion of sufficiency of the explanation and demonstrate how and in which conditions we meet this criterion. In Table \ref{sparsity_evaluation} the experimental section, we also outline how \methodname\ addresses the requirements of \textbf{sparsity}. We address the question of \textbf{fidelity} experimentally, by measuring insertion and deletion metrics in Section \ref{eval_interpretability}.

We address \textbf{necessity} (i.e., presence of the elements of the explanation necessary for the decision making) of the explanations $E(\mathbf{x}, \mathcal{D}, P^C)$ from Equation \ref{equation_explanations} by visualising the elements of exact same nearest-neighbour samples that are present in the decision-making procedure in Equation \ref{d_computation}.

We define \textbf{sufficiency} of the explanations $E(\mathbf{x}; \mathcal{D}, P^{C})$ in a way that the same explanation would imply the same output:
\begin{equation}
    \forall \mathbf{x}, \mathbf{x}' E(\mathbf{x}'; \mathcal{D}, P^{C})= E(\mathbf{x}; \mathcal{D}, P^{C})  \implies G(\mathbf{x}'; \theta, \mathcal{D}, P^{C}) = G(\mathbf{x}; \theta, \mathcal{D}, P^{C})
    \label{sufficiency}
\end{equation}

\begin{theorem}{Assume $\tilde{g}(\mathbf{x}; \theta)= {g}(\mathbf{x}; \theta)\ \forall {g}(\mathbf{x}; \theta) \in {G}(\mathbf{x}; \theta)$. Then the explanation $E(\mathbf{x}; \mathcal{D})$ is sufficient for the prediction $G(\mathbf{x}; \theta, \mathcal{D})$} according to Algorithm \ref{alg:one}.
\label{theorem_faithfulness}
\end{theorem}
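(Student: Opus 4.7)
The plan is to establish sufficiency as defined by Equation~\ref{sufficiency} by tracing the structural components of the explanation back to the ingredients of the prediction, treating both as deterministic functions of the nearest-neighbour training samples selected in the class-defining feature coordinates. Starting from the hypothesis $E(\mathbf{x}'; \mathcal{D}, P^C) = E(\mathbf{x}; \mathcal{D}, P^C)$, I would conclude $G(\mathbf{x}'; \theta, \mathcal{D}, P^C) = G(\mathbf{x}; \theta, \mathcal{D}, P^C)$ by matching the two underlying sets $\mathbf{D}^*(\mathbf{x}, \theta, P^C)$ and $\mathbf{D}^*(\mathbf{x}', \theta, P^C)$ and then applying the label operator $l$.

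Concretely, the argument would proceed in three short steps. First, I would unpack Equation~\ref{equation_explanations}: the explanation is an indexed collection over $P^{\tilde{g}(\mathbf{x}; \theta)}$ of pairs $\langle s^i_L(\mathbf{x}_i, \theta), s^i_L(\mathbf{d}_i^{\mathrm{nearest}}, \theta)\rangle$, so equality of explanations as indexed sets forces equality of the index sets $P^{\tilde{g}(\mathbf{x};\theta)} = P^{\tilde{g}(\mathbf{x}';\theta)}$ and equality of the training-side feature signatures entry by entry. Second, I would invoke the hypothesis $\tilde{g}(\mathbf{x}; \theta) = g(\mathbf{x}; \theta)$ for every $g \in G(\mathbf{x}; \theta)$ (and its analogue for $\mathbf{x}'$) to align the class-defining features that index the explanation with the class-defining features that feed $\mathbf{D}^*$ in Equation~\ref{d_computation}; this guarantees that no coordinate relevant to $G$ is absent from $E$, so that the entire decision-relevant neighbour structure is recorded in the explanation. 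Third, using the definitions of $\mathbf{d}_i^{\mathrm{nearest}}$ and $\mathbf{D}^*$, I would observe that the samples realising the top-$K$ distances at each class-defining feature are determined by their signatures $s^i_L(\cdot, \theta)$ appearing in the explanation; invoking the label operator $l$ of Equation~\ref{final_label_computation} entry-wise then yields equal per-feature predictions, and hence $G(\mathbf{x}'; \theta, \mathcal{D}, P^C) = G(\mathbf{x}; \theta, \mathcal{D}, P^C)$.

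The hard part will be the passage from equality of feature signatures $s^i_L(\mathbf{d}, \theta)$ to equality of training samples $\mathbf{d}$ themselves. This requires an injectivity-type condition on the $\operatorname{B\text{-}cos}$ encoder restricted to $\mathcal{D}$: distinct training samples must produce distinct feature vectors at each class-defining coordinate $i \in P^{\tilde{g}(\mathbf{x};\theta)}$, or at least distinct enough that the top-$K$ argmin in Equation~\ref{d_computation} is uniquely determined. Such a condition is generically met by continuous features on a finite dataset and should either be stated as a mild regularity assumption on $\mathcal{D}$ or discharged by a measure-zero perturbation argument. Once it is in place, the remaining chain of implications is a mechanical chase through Equations~\ref{final_label_computation}--\ref{equation_explanations}, and the consistency hypothesis $\tilde{g} = g\ \forall g \in G$ supplies exactly the link needed between the index set of $E$ and the index set driving $G$.
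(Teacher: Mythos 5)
Your proposal is essentially the paper's own argument run in the forward direction: the paper proves the claim by contradiction (unequal predictions, under the pseudo-label consistency assumption, force unequal sets $\mathbf{D}^*(\mathbf{x},\theta,P^C)$ and $\mathbf{D}^*(\mathbf{x}',\theta,P^C)$ because one training datum cannot carry two labels, hence unequal explanations), which is logically the same chain you trace from equal explanations to equal neighbour sets to equal labels via the operator $l$. The injectivity issue you flag---that equality of the signatures $s^i_L(\mathbf{d},\theta)$ appearing in $E$ must force equality of the underlying training samples, or at least of their labels---is a genuine point, and the paper's proof silently assumes it in the step asserting that explanations computed over two different subsets of training samples ``cannot possibly be the same''; making it an explicit regularity assumption on $\mathcal{D}$, as you propose, would tighten rather than merely reproduce the published argument.
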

\begin{proof} 

 Suppose that $E(\mathbf{x}'; \mathcal{D}, P^{C})= E(\mathbf{x}; \mathcal{D}, P^{C})$ and $G(\mathbf{x}'; \theta, \mathcal{D}, P^{C}) \ne G(\mathbf{x}; \theta, \mathcal{D}, P^{C})$ for some $\mathbf{x}, \mathbf{x}'$. Using the assumption that $\tilde{g}(\mathbf{x}; \theta)= {g}(\mathbf{x}; \theta)\ \forall {g}(\mathbf{x}; \theta) \in {G}(\mathbf{x}; \theta)$, one can note that the two sets $\mathbf{D}^* (\mathbf{x}, \theta, P^C), \mathbf{D}^* (\mathbf{x}', \theta, P^C)$ cannot possibly be the same as the labels of the two sets are different and the same training datum $\mathbf{d}$ cannot have two different labels, i.e. $G(\mathbf{x}'; \theta, \mathcal{D}, P^{C}) \ne G(\mathbf{x}; \theta, \mathcal{D}, P^{C})$ means that $\mathbf{D}^* (\mathbf{x}, \theta, P^C) \ne \mathbf{D}^* (\mathbf{x}', \theta, P^C)$. This means that the explanations $E(\mathbf{x}, \mathcal{D}, P^C)$ and $E(\mathbf{x}', \mathcal{D}, P^C)$ are calculated in Equation \ref{equation_explanations} over two different subsets of training samples and therefore cannot possibly be the same. Therefore, we can see that, by contradiction, Equation \ref{sufficiency} holds true for Algorithm \ref{alg:one}.

\end{proof}

\begin{figure}
\resizebox{0.5\textwidth}{!}{
\begin{tikzpicture}[domain=0:15]
\node at (-4, -2.5) {\includegraphics[height=1.95cm]{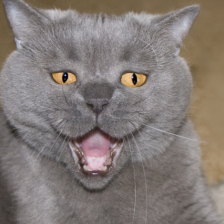}};
\node at (-2, -2.5) {\includegraphics[height=1.95cm]{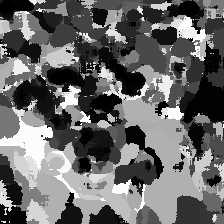}};
\node at (0, -2.5) {\includegraphics[height=1.95cm]{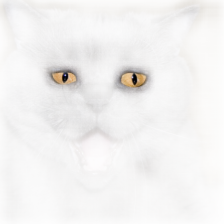}};
\node at (2, -2.5) {\includegraphics[height=1.95cm]{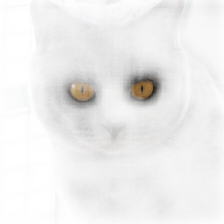}};
\node at (4, -2.5) {\includegraphics[height=1.95cm]{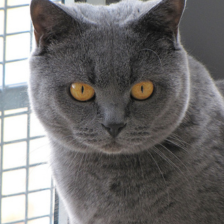}};

\draw [draw=black, line width=0.5mm] (-4.95,-1.5) rectangle (-3.05,-3.5);
\draw [draw=black, line width=0.5mm] (-2.95,-1.5) rectangle (-1.05,-3.5);
\draw [draw={rgb:red,40;green,101;blue,238}, line width=0.5mm] (-0.95,-1.5) rectangle (0.95,-3.5);
\draw [draw={rgb:red,86;green,20;blue,107}, line width=0.5mm] (1.05,-1.5) rectangle (2.95,-3.5);
\draw [draw={rgb:red,81;green,181;blue,52}, line width=0.5mm] (3.05,-1.5) rectangle (4.95,-3.5);

\node at (-3,-1.25) {\makecell{\footnotesize{This is a British shorthair cat as}}};
\node[text={rgb:red,40;green,101;blue,238}] at (0,-1.25) {\makecell{\footnotesize{this part looks}}};
\node[text={rgb:red,86;green,20;blue,107}] at (2,-1.25) {\makecell{\footnotesize{like this one}}};
\node[text={rgb:red,81;green,181;blue,52}] at (4,-1.25) {\makecell{\footnotesize{from this image}}};

\end{tikzpicture}
}
\resizebox{0.5\textwidth}{!}{
\begin{tikzpicture}[domain=0:15]
\node at (-4, -2.5) {\includegraphics[height=1.95cm]{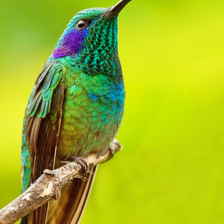}};
\node at (-2, -2.5) {\includegraphics[height=1.95cm]{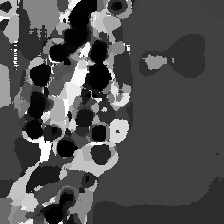}};
\node at (0, -2.5) {\includegraphics[height=1.95cm]{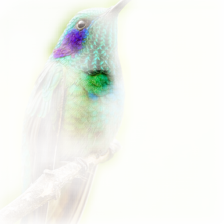}};
\node at (2, -2.5) {\includegraphics[height=1.95cm]{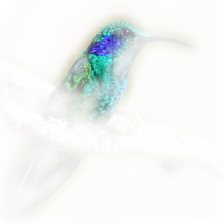}};
\node at (4, -2.5) {\includegraphics[height=1.95cm]{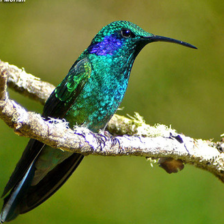}};

\draw [draw=black, line width=0.5mm] (-4.95,-1.5) rectangle (-3.05,-3.5);
\draw [draw=black, line width=0.5mm] (-2.95,-1.5) rectangle (-1.05,-3.5);
\draw [draw={rgb:red,40;green,101;blue,238}, line width=0.5mm] (-0.95,-1.5) rectangle (0.95,-3.5);
\draw [draw={rgb:red,86;green,20;blue,107}, line width=0.5mm] (1.05,-1.5) rectangle (2.95,-3.5);
\draw [draw={rgb:red,81;green,181;blue,52}, line width=0.5mm] (3.05,-1.5) rectangle (4.95,-3.5);

\node at (-3,-1.25) {\makecell{\footnotesize{This is a Green Violetear as}}};
\node[text={rgb:red,40;green,101;blue,238}] at (0,-1.25) {\makecell{\footnotesize{this part looks}}};
\node[text={rgb:red,86;green,20;blue,107}] at (2,-1.25) {\makecell{\footnotesize{like this one}}};
\node[text={rgb:red,81;green,181;blue,52}] at (4,-1.25) {\makecell{\footnotesize{from this image}}};

\end{tikzpicture}
}
\caption{Examples of \textit{\methodname\ panel} interpretations for Oxford-IIIT Pets (left) and CUB-200-211 dataset (right).}
\vspace{-0.4cm}
    \label{Examples_of_interpretations}
\end{figure}

\begin{table}
\centering
 \caption{Evaluation of performance against ProtoPNet (\cite{chen2019looks}), B-Cos (\cite{bohle2024b})  and common deep-learning baselines on CUB-200-2011 (full images), the values denoted by $^*$ are obtained from \cite{donnelly2022deformable}}
 \begin{tabular}{lccccc} 
 \toprule
\textbf{Architecture} & \textbf{Baseline} & \textbf{ProtoPNet} & \textbf{B-cos} & \textbf{\methodname} \\ [0.5ex] 
 \midrule
 ResNet34 (\cite{he2016deep}) & $76.0*$ & $72.4^*$ & $74.3$ & $73.8$ \\ 
 ResNet152 (\cite{he2016deep}) & $79.2^*$ & $74.3^*$ & $76.5$ &  $76.2$ \\
 DenseNet121 (\cite{huang2017densely}) & $78.2*$ & $74.0^*$  & $73.6$ &  $73.2$ \\
 DenseNet161 (\cite{huang2017densely}) & $80.0^*$ & $75.4^*$ & $76.1$ &  $76.1$ \\
 \bottomrule
 \end{tabular}
 \label{accuracy_evaluation}
\end{table}
\begin{table}
\centering
 \caption{Evaluation of performance against B-Cos  (\cite{bohle2024b}) and baseline ViT (\cite{dosovitskiy2021an}), $K$-NN refers to the baseline of B-cos + $K=3$ nearest neighbours, pretrained on ImageNet}
\begin{tabular}{lccccc} 
 \toprule
\textbf{Dataset} & \textbf{ViT} & \textbf{B-cos} & \textbf{$k$-NN} & \textbf{\methodname} \\ [0.5ex] 
\midrule
 Oxford-IIIT Pets & $90.32 \pm 0.03$ & $89.32 \pm 0.13$ & $89.23\pm0.11$ & $87.73\pm0.21$ \\ 
 CUB-200-2011 & $79.62 \pm 0.04$ & $79.23 \pm 0.08$ & $78.98\pm 0.06$  & $74.14 \pm0.18 $\\
 Stanford Cars & $90.72\pm0.32$ & $86.53 \pm 0.31$ & $87.95  \pm 0.24$  & $86.81 \pm 0.24$ \\
 CIFAR-10 & $93.34\pm0.08$ & $93.10\pm 0.15$ & $93.28\pm 0.09$  & $91.21 \pm 0.19$\\
 CIFAR-100 & $78.61\pm 0.03$ & $76.07 \pm 0.06$ & $74.23 \pm 0.04$ & $76.42 \pm0.12$\\
ImageNet & $78.90 \pm 0.24$ & $77.78 \pm 0.24$ & $75.16 \pm 0.24$ & $74.28 \pm 0.38$ \\
 \bottomrule
 \end{tabular}
 \label{accuracy_eval_datasets}
\end{table}

\section{Experiments and discussion}

In this section, we evaluate \methodname\ through a series of quantitative and qualitative experiments. We assess the model's performance on standard benchmarks (accuracy, fidelity, and sparsity) to validate the method's effectiveness. We compare the accuracy of \methodname\ with other baseline methods. We also show the robustness of the model performance across different backbones. We demonstrate the fidelity of \methodname\ by evaluating the method using causal matrices (\cite{ghorbani2019towards}). Additionally, we present qualitative analyses by visualizing the prototypical regions identified during inference, providing insights into the interpretability and decision-making process of the model. These experiments highlight the model’s ability to provide transparent and faithful explanations while maintaining competitive accuracy.

Figure \ref{Examples_of_interpretations} shows the explanation generated by the method using only one evidence sample and one feature alone used for prediction. The second image in the panel shows the super-pixel like segmentation generated based on the dominant CDF feature for every pixel ($\arg \max_i \{\mathbf{s}_L^i(\mathbf{x}, \theta)\} \forall i \in (\mathbf{s}_L^1(\mathbf{x}, \theta), \cdots, \mathbf{s}_L^i(\mathbf{x}, \theta)), \cdots \mathbf{s}_L^{C_L}(\mathbf{x}, \theta)$). We present more interpretation examples in Appendix \ref{additional_qualitative_results}.

\subsection{Datasets}

\paragraph{Datasets} We train and evaluate the presented model on a number of commonly-used computer vision datasets. CIFAR-10\&100 \citep{krizhevsky2009learning} contain  generic natural images from 10 and 100 diverse classes respectively. CUB-200-2011 \citep{welinder2010caltech} is a commonly used dataset for evaluating interpretable vision models, which contains $200$ fine-grained classes of birds. Stanford cars dataset \citep{krause20133d} contains $196$ classes of cars. Oxford-IIIT Pets \citep{parkhi2012cats} contains a fine-grained collection of images of $37$ classes of cats and dogs.  Finally, we present the results on ImageNet (ILSVRC 2012) \citep{ILSVRC15} which has $1000$ diverse classes.

\vspace{-0.4cm}
\paragraph {Baselines} We compare \methodname\ to the following well-known baselines: (1) Standard architectures such as ResNet \cite{he2016deep}, DenseNet \cite{huang2017densely} and ViT \cite{dosovitskiy2021an} (2) the $\operatorname{B-Cos}$ counterparts of the aforementioned architectures \citep{bohle2022b,bohle2024b} and (3) a number of interpetable and explainable ML methods including ProtoPNet \citep{chen2019looks}, Deformable ProtoPNet \citep{donnelly2022deformable} and CAM \citep{zhou2016learning}.

\vspace{-0.4cm}
\paragraph {Experimental setup} We pretrain the backbone $\operatorname{B-cos}$ models on ImageNet \citep{ILSVRC15} and then on target datasets. The details of the experimental setup, hardware configuration and hyperparameters are described in Appendix \ref{experimental_setup_section}.

\vspace{-0.3cm}
\paragraph{Accuracy evaluation}

\begin{table}[t]
\centering
\small
 \caption{Interpretability vs accuracy, \% (adopted from \cite{donnelly2022deformable}, all values except from ViT, $\operatorname{B-cos}$, and \methodname, come from there)}
 \begin{tabular}{p{0.20\textwidth}p{0.25\textwidth}p{0.08\textwidth}p{0.25\textwidth}p{0.08\textwidth}}  
 \toprule
\textbf{Interpretability} 
    & \makecell{\textbf{Method}} & \textbf{ CUBS} 
    & \makecell{\textbf{Method}} & \textbf{ Cars}  
    \\\midrule

 \makecell{None} 
    & \makecell{ViT \\ \small{(\cite{dosovitskiy2021an})}} 
    & \makecell{$79.6$} 
    & \makecell{ViT \\ \small{(\cite{dosovitskiy2021an})}} 
    & \makecell{$92.72$}  
    \\ \midrule
 \makecell{Part-level\\ attention} 
    & \makecell{TASN \\ \small{(\cite{zheng2019looking})}} 
    & \makecell{$87.0$} 
    & \makecell{FCAN \\ \small{(\cite{liu2016fully})} \\ RA-CNN \\ \small{(\cite{fu2017look})}} 
    & \makecell{$84.2$ \\ $87.3$}  
    \\ \midrule 
 \makecell{Part-level\\ attention \\ + Prototypes} 
    & \makecell{ProtoPNet \\ \small{(\cite{chen2019looks})} \\ Def. ProtoPNet \\ \small{(\cite{donnelly2022deformable})}} 
    & \makecell{$81.1$ \\ $86.4$} 
    & \makecell{ProtoPNet \\ \small{(\cite{chen2019looks})} \\ Def. ProtoPNet \\ \small{(\cite{donnelly2022deformable})}} 
    & \makecell{$77.3$ \\ $86.5$}  
    \\ \midrule
 \makecell{Object-level\\ attention} 
    & \makecell{CAM \\ \small{(\cite{zhou2016learning})} \\ CSG \\ \small{(\cite{liang2020training})} \\ B-cos \\ \small{(\cite{bohle2024b})}} 
    & \makecell{$63.0$ \\ $78.5$ \\ $79.2$} 
    & \makecell{B-cos \\ \small{(\cite{bohle2024b})}} 
    & \makecell{$86.5$}  
    \\ \midrule 
 \makecell{Object-level\\ attention\\ + Prototypes} 
    & \makecell{\methodname} 
    & \makecell{$74.0$} 
    & \makecell{\methodname} 
    & \makecell{$86.80$}  
    \\ \bottomrule
 \end{tabular}
 \label{accuracy_eval_categories}
\end{table}

Table  \ref{accuracy_evaluation} shows that \methodname\ provides competitive accuracy compared to the baselines of $\operatorname{B-Cos}$/ViT on the full-frame CUB-200-2011 dataset \citep{welinder2010caltech}. In all cases, the accuracy is calculated as the mode of the values of output predictions $G (x, \theta)$. In this experiment, the B-cos architecture for both B-cos and \methodname\ mirrors the corresponding deep-learning baseline, i.e. $\operatorname{B-Cos}$/ViT is a $\operatorname{B-cos}$ counterpart of the ViT model as described in \cite{bohle2024b}. Table  \ref{accuracy_evaluation}  does not list the ViT results. This is due to the reason that, as suggested in \cite{xue2022protopformer}, ProtoPNet cannot be used with the ViT architecture without substantial modifications.

In Table \ref{accuracy_eval_datasets}, we evaluate the performance of the model on a variety of datasets using just the $\operatorname{B-cos}$/ViT backbone as outlined in Appendix \ref{experimental_setup_section}. Additionally, in Appendix \ref{performance_nf}, we show that surprisingly, \methodname\ provides impressive capabilities for learning without finetuning on the target data, opening up the potential for adaptation of the method to the new datasets without finetuning. In this setting, the models were pretrained on ImageNet, and then, during the inference stage, matched by the nearest neighbours procedure as described in Algorithm \ref{alg:one}. The $k$-NN results were calculated for $k=3$ using the B-cos backbone.

In Table \ref{accuracy_eval_categories}, we demonstrate the trade-off between interpretability and accuracy across the categories. Following \cite{chen2019looks}, we compare different types of explainable and interpretable models. Many of the \textit{post hoc} attribution-based methods, as well as original $\operatorname{B-cos}$ model, provide interpretation through object-level attention, while patch-based methods such as, notably, ProtoPNet (\cite{chen2019looks}) are referred to as part-level attention methods.

\subsection{Evaluation  of the Interpretability Properties}
\label{eval_interpretability}
Table \ref{insertion_deletion} presents the evaluation of Average Drop, Average Increase, C-insertion, and C-deletion metrics (\cite{ghorbani2019towards}), which are the standard metrics to quantify fidelity of the explanations both for \textit{ante hoc} (by-design) and \textit{post hoc} methods. The average drop (increase) metrics are calculated as the drop (increase) in performance of prediction when we drop (add) $50\%$ of the pixels with the lowest (highest) attribution.  The C-insertion and C-deletion metrics represent the area under the curve for insertion (deletion) of pixels in the increasing (decreasing) order of pixel attribution value.  

\begin{table}
\centering
 \caption{Evaluation of Average Drop, $\%$, Average Increase, $\%$, C-insertion and C-deletion metrics (results marked with $^*$ are taken from \cite{zeng2023abs})}
 \begin{tabular}{lcccc} 
 \toprule
\textbf{Method} & \textbf{Drop $\downarrow$} & \textbf{Increase  $\uparrow$} & \textbf{C-insertion} $\uparrow$ & \textbf{C-deletion} $\downarrow$\\ [0.5ex] 
 \midrule
 Grad-CAM (\cite{selvaraju2017grad}) & $41.5^*$ & $20.8^*$ & $0.4626^*$ & $0.1110^*$\\ 
 Grad-CAM++ (\cite{chattopadhay2018grad}) & $40.8^*$ & $22.3^*$ & $0.4484^*$ & $0.1179^*$\\ 
 SGCAM++ (\cite{omeiza2019smooth}) & $41.1^*$ & $23.4^*$ & $0.4504^*$ & $0.1169^*$\\ 
 Score-CAM (\cite{wang2020score}) & $35.6^*$ & $29.5^*$ & $0.4929^*$ & $0.1099^*$ \\ 
 Group-CAM (\cite{zhang2021group}) & $35.7^*$ & $29.7^*$ & $0.4930^*$ & $0.1108^*$ \\ 
Abs-CAM (\cite{zeng2023abs}) & $34.2^*$ & $30.1^*$ & $0.4949^*$ & $\mathbf{0.1096}^*$ \\ 

\methodname  & $41.3$ & $\mathbf{36.5}$ & {$\mathbf{0.7365}$} &  $0.1214$ \\ 

 \bottomrule
 \end{tabular}
 \label{insertion_deletion}
\end{table}

\begin{table}
\centering
 \caption{Evaluation of sparsity using PQ-index (larger is better) between \methodname\ and ViT (\cite{dosovitskiy2021an}) }

 \begin{tabular}{cccccc} 
 \toprule
\textbf{\makecell{Method/\\Dataset}} &  \textbf{\makecell{Oxford-IIIT \\ Pets}} & \textbf{\makecell{CUB-200-2011 \\ }} & \textbf{\makecell{Stanford Cars \\ }} & \textbf{\makecell{CIFAR-10}} & \textbf{\makecell{CIFAR-100}} \\ [0.5ex] 
 \midrule
 ViT & 0.21  &  0.42 & 0.61  & 0.63 & 0.45\\ 
\methodname & \textbf{0.26} & \textbf{0.52} & \textbf{0.67} & \textbf{0.65} & \textbf{0.48}\\ 
 \bottomrule
 \end{tabular}
 \label{sparsity_evaluation}
\end{table}
The property of sparsity is crucial for selecting meaningful class-defining features. Low sparsity would mean that more CDF need to be selected to meaningfully represent the class. To measure sparsity, we use the PQ-Index sparsity measure (\cite {diao2022pruning}). For the vector $\mathbf{w} \in \mathbb{R}^d$ it is defined as
$I_{p, q} (\mathbf{w}) = 1 - d^{q^{-1} -p^{-1}}\frac{|\mathbf{w}|_p}{|\mathbf{w}|_q},$
where $|\mathbf{w}|_q > 0$ is a $\ell^q$ norm, $0<p\le1<q$ are hyperparameters. In our analysis, we use the values $p=1, q=2$.

\begin{wrapfigure}[11]{R}{0.29\textwidth}
\vspace{-0.6cm}
    \includegraphics[width=\linewidth]{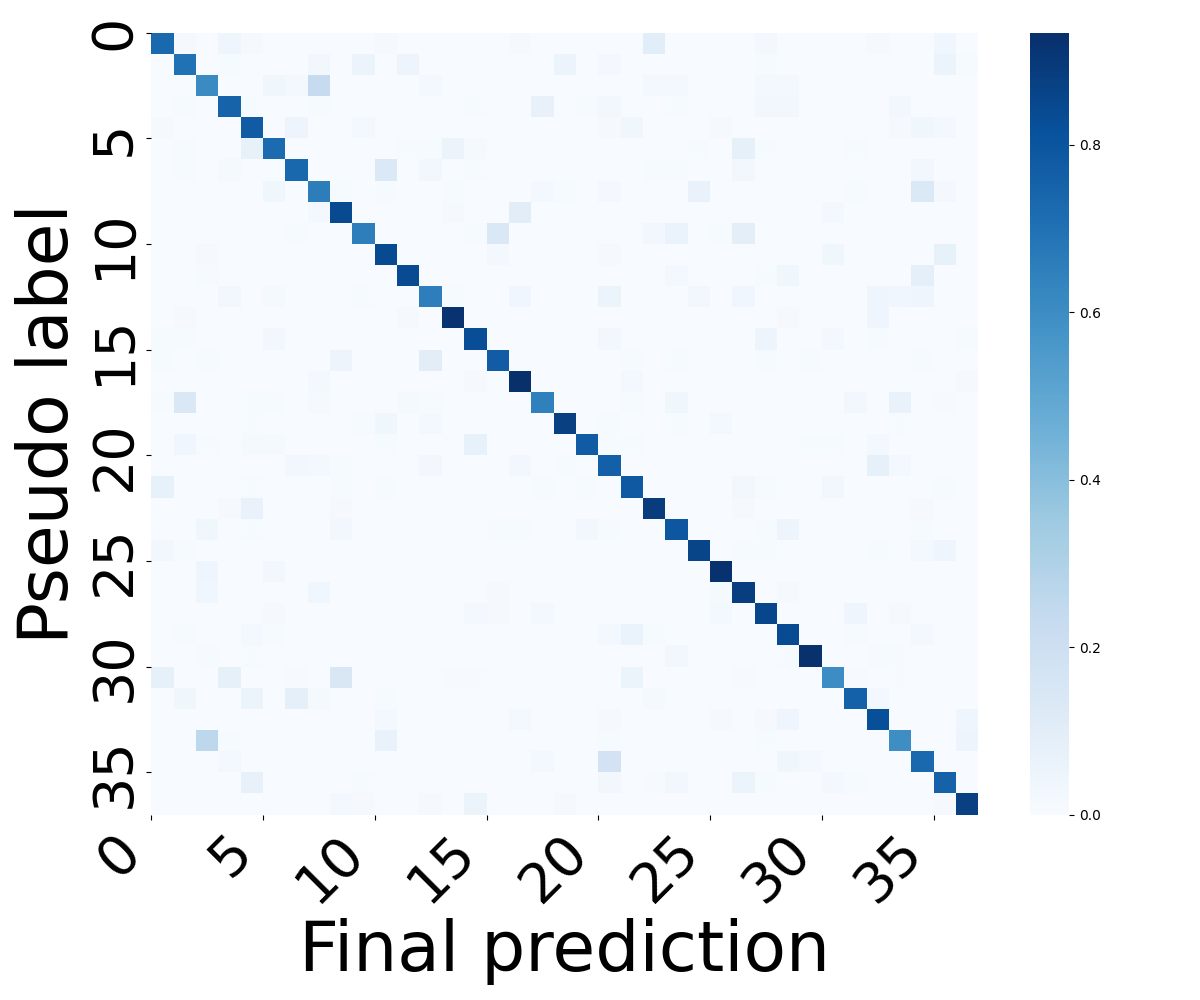}
    \centering
    \caption{ Final prediction vs pseudo-label confusion matrix on Oxford-IIIT Pets dataset}
    \label{pseudolabels_vs_real_labels}
\end{wrapfigure}

In Table \ref{sparsity_evaluation}, we measure the backbone $\operatorname{B-cos}$/ViT architecture sparsity for the extracted features $W_{1\rightarrow L} (\mathbf{x}, \theta)\mathbf{x}$ against its standard ViT counterpart, trained on the target data. We found that \methodname, based on $\operatorname{B-cos}$/ViT features, benefits from better feature sparsity compared to the ViT baseline, which justifies the use of class-defining features.
\vspace{-0.4cm}
\paragraph{Analysis of the impact of pseudo-labels}
\label{pseudolabel_analysis}

 In Figure \ref{pseudolabels_vs_real_labels}, we present the confusion matrices for the Oxford-IIIT Pets dataset, which compares the final prediction vs pseudo-labels. The extended version of this figure which compares between the true labels, the final predictions, and the pseudo-labels is included in Appendix \ref{confusion_matrices_for_pseudolabels_appendix}.

\subsection{Demonstration of prototypical explanation}
\begin{figure}
\centering
\resizebox{0.75\textwidth}{!}{
\begin{tikzpicture}[domain=0:15]
\node at (-13,-1.25) {Testing image};
\node at (-13, -3.75) {\includegraphics[height=4cm]{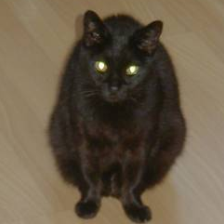}};

\node at (-3,-1.25) {Class-defining feature interpretations (testing image)};
\node at (-10, -2.5) {\includegraphics[height=2cm]{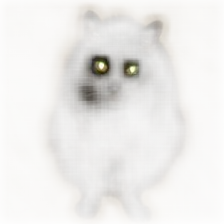}};
\node at (-8, -2.5) {\includegraphics[height=2cm]{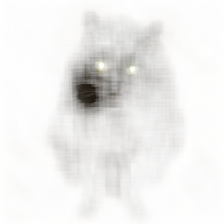}};
\node at (-6, -2.5) {\includegraphics[height=2cm]{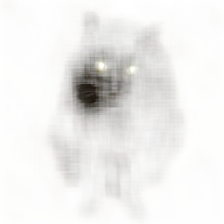}};
\node at (-4, -2.5) {\includegraphics[height=2cm]{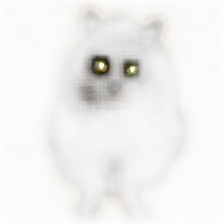}};
\node at (-2, -2.5) {\includegraphics[height=2cm]{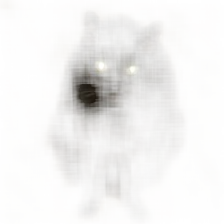}};
\node at (0, -2.5) {\includegraphics[height=2cm]{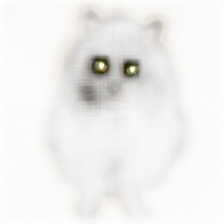}};
\node at (2, -2.5) {\includegraphics[height=2cm]{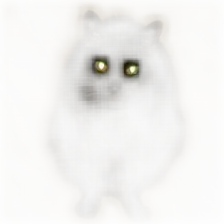}};
\node at (4, -2.5) {\includegraphics[height=2cm]{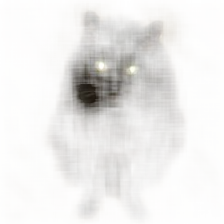}};
\draw[<->, line width=0.5mm] (-10,-3.5) -- (-10,-4);
\draw[<->, line width=0.5mm] (-8,-3.5) -- (-8,-4);
\draw[<->, line width=0.5mm] (-6,-3.5) -- (-6,-4);
\draw[<->, line width=0.5mm] (-4,-3.5) -- (-4,-4);
\draw[<->, line width=0.5mm] (-2,-3.5) -- (-2,-4);
\draw[<->, line width=0.5mm] (0,-3.5)  -- (0,-4);
\draw[<->, line width=0.5mm] (2,-3.5)  -- (2,-4);
\draw[<->, line width=0.5mm] (4,-3.5)  -- (4,-4);

\node at (-3,-6.25) {Class-defining feature interpretations (training images)};
\node at (-10, -5){\includegraphics[height=2cm]{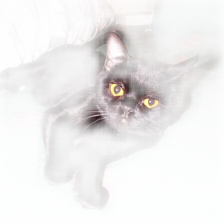}};
\node at (-8, -5){\includegraphics[height=2cm]{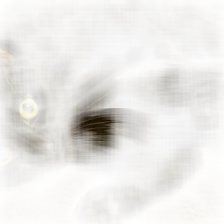}};
\node at (-6, -5){\includegraphics[height=2cm]{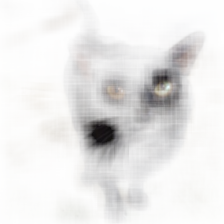}};
\node at (-4, -5){\includegraphics[height=2cm]{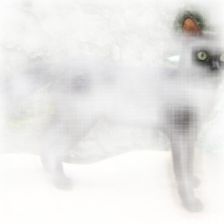}};
\node at (-2, -5){\includegraphics[height=2cm]{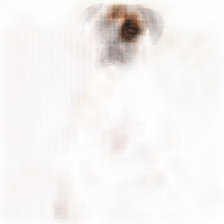}};
\node at (0, -5){\includegraphics[height=2cm]{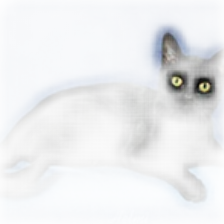}};
\node at (2, -5){\includegraphics[height=2cm]{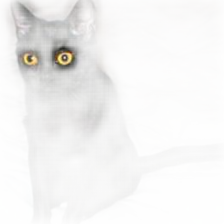}};
\node at (4, -5){\includegraphics[height=2cm]{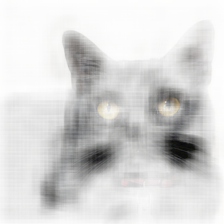}};

\node at (-10,-8.75) {Bombay};
\node at (-8,-8.75) {Bombay};
\node at (-6,-8.75) {Bombay};
\node at (-4,-8.75) {Bombay};
\node at (-2,-8.75) {Boxer};
\node at (0,-8.75) {Bombay};
\node at (2,-8.75) {Bombay};
\node at (4,-8.75) {Bombay};

\node at (-3,-9.25) {Training data};
\node at (-10, -7.5) {\includegraphics[width=1.95cm]{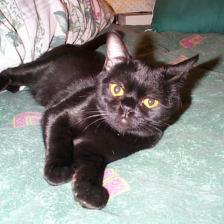}};
\node at (-8, -7.5) {\includegraphics[width=1.95cm]{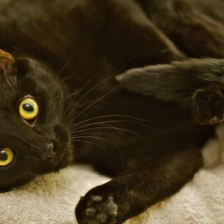}};
\node at (-6, -7.5) {\includegraphics[width=1.95cm]{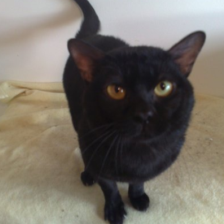}};
\node at (-4, -7.5) {\includegraphics[width=1.95cm]{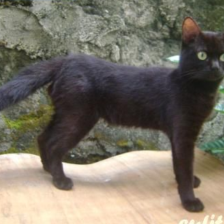}};
\node at (-2, -7.5) {\includegraphics[width=1.95cm]{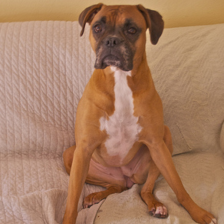}};
\node at (0, -7.5) {\includegraphics[width=1.95cm]{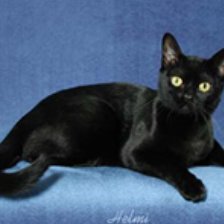}};
\node at (2, -7.5) {\includegraphics[width=1.95cm]{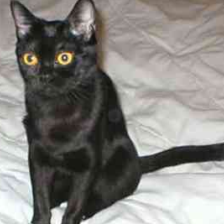}};
\node at (4, -7.5) {\includegraphics[width=1.95cm]{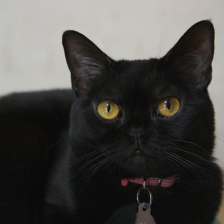}};

\draw [draw=red, line width=0.5mm] (-3,-6.5) rectangle (-1,-8.5);
\draw [draw=red, line width=0.5mm] (-3,-4) rectangle (-1,-6);

\end{tikzpicture}
}

\caption{Interpretation for a sample image from the Oxford-IIIT Pets dataset: the model correctly classifies the input image as 'Bombay cat'. This visualization demonstrates the similarity between the test image and seven training images of the 'Bombay cat' class and one image of a boxer dog (highlighted in red), offering insight into the model's decision-making process.}
    \label{qualitative_results}
\end{figure}

We qualitatively show the examples for the prototypical explanation in Figure \ref{qualitative_results}. In this figure as well as in Appendix \ref{additional_qualitative_results}, we demonstrate that on a number of use cases, the model can present both factual and counterfactual interpretations for a number of complex scenarios. The reader can note the correspondence between the features of the training and testing image attribution maps arising from explanations by class-defining features.

\subsection{Parameter sensitivity analysis}
In Figure \ref{fig:sensitivity_analysis} we compare the performance of the method depending upon the number $K$ of nearest neighbours as well as the number of features $M$ used for the prediction. This analysis shows that there is an inherent trade-off between the performance and the conciseness of explanation. 
 In Appendix \ref{Appendix_hyperparameter_choice} we also show similar experimental results in a setting similar to the state-of-the-art work. In this setting, instead of performing the predictions feature-by-feature as in Algorithm \ref{alg:one}, Step 4, we show the prediction performance using $\ell^2$ distances over the whole set of CDFs as common in the current literature.  This creates another trade-off: while it may increase the performance, the downside of such an alternative approach would be that the explanations, and the decisions, would not follow directly from the given features but from their combination.

\begin{figure}
    \centering
    \begin{subfigure}[b]{0.19\textwidth}
        \centering
        \includegraphics[width=\textwidth]{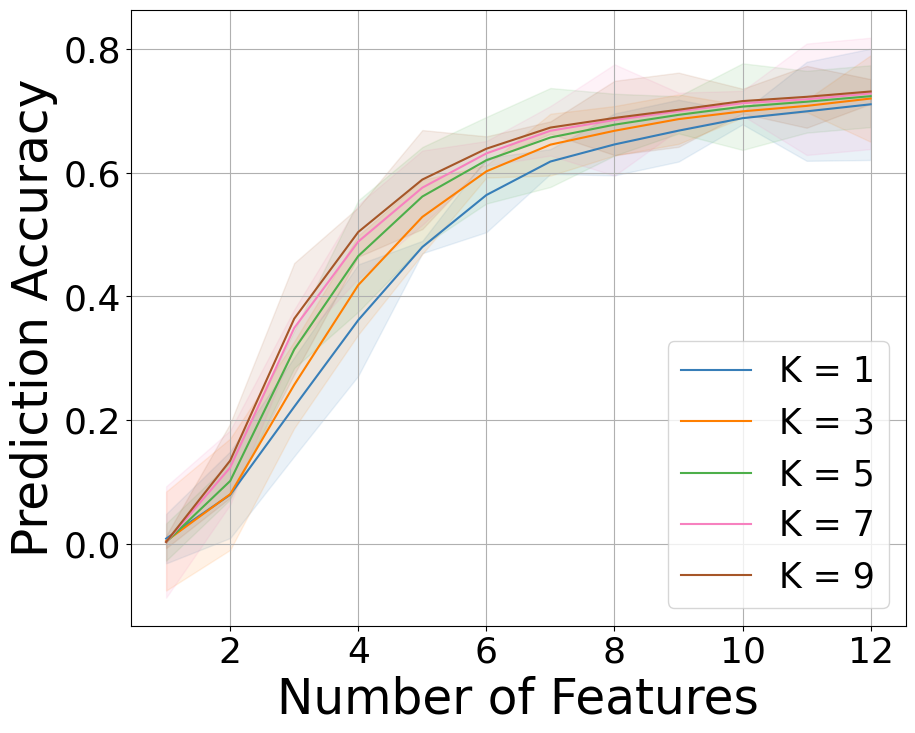}
        \caption{CUBS}
        \label{fig:om_cubs}
    \end{subfigure}
    \hfill
    \begin{subfigure}[b]{0.19\textwidth}
        \centering
        \includegraphics[width=\textwidth]{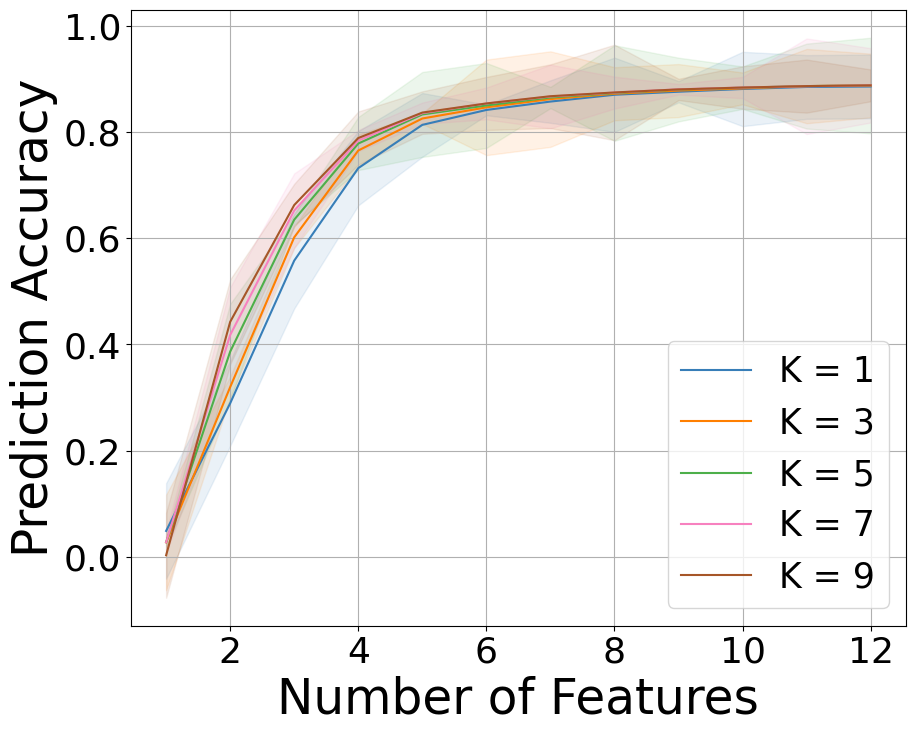}
        \caption{Oxford-IIIT Pets}
        \label{fig:om_pets}
    \end{subfigure}
    \hfill
    \begin{subfigure}[b]{0.19\textwidth}
        \centering
        \includegraphics[width=\textwidth]{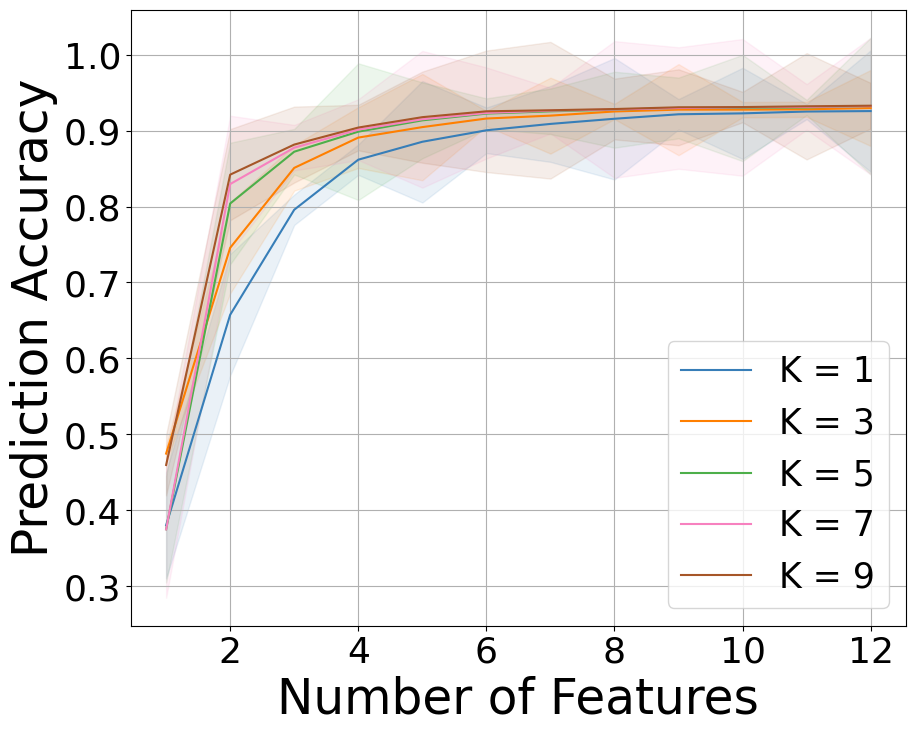}
        \caption{CIFAR-10}
        \label{fig:om_cifar10}
    \end{subfigure}
    \hfill
    \begin{subfigure}[b]{0.19\textwidth}
        \centering
        \includegraphics[width=\textwidth]{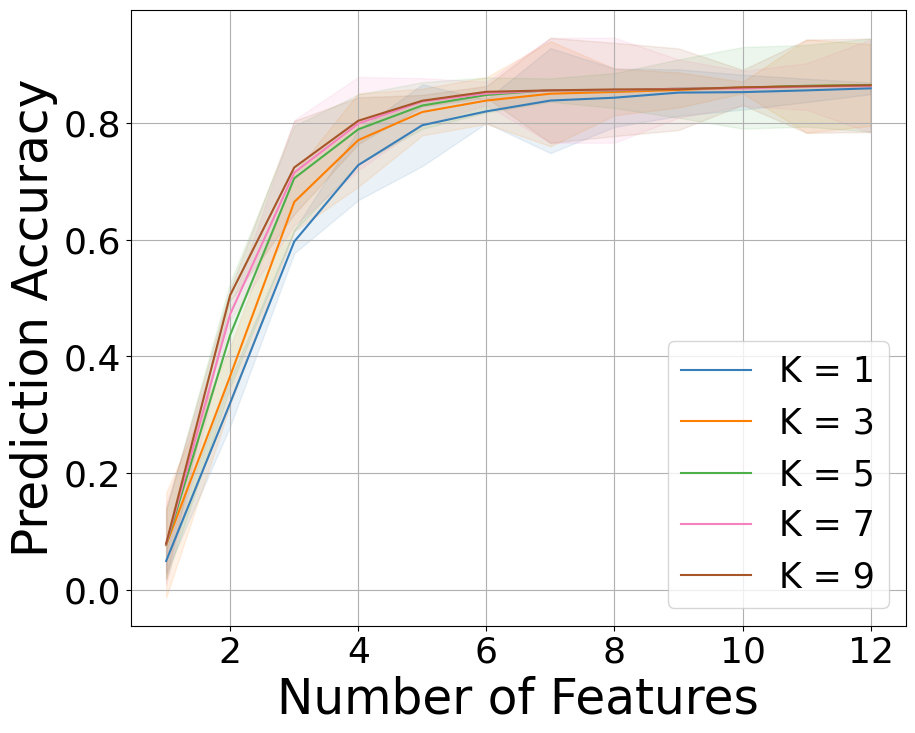}
        \caption{Stanford Cars}
        \label{fig:om_stanford}
    \end{subfigure}
    \hfill
    \begin{subfigure}[b]{0.19\textwidth}
        \centering
        \includegraphics[width=\textwidth]{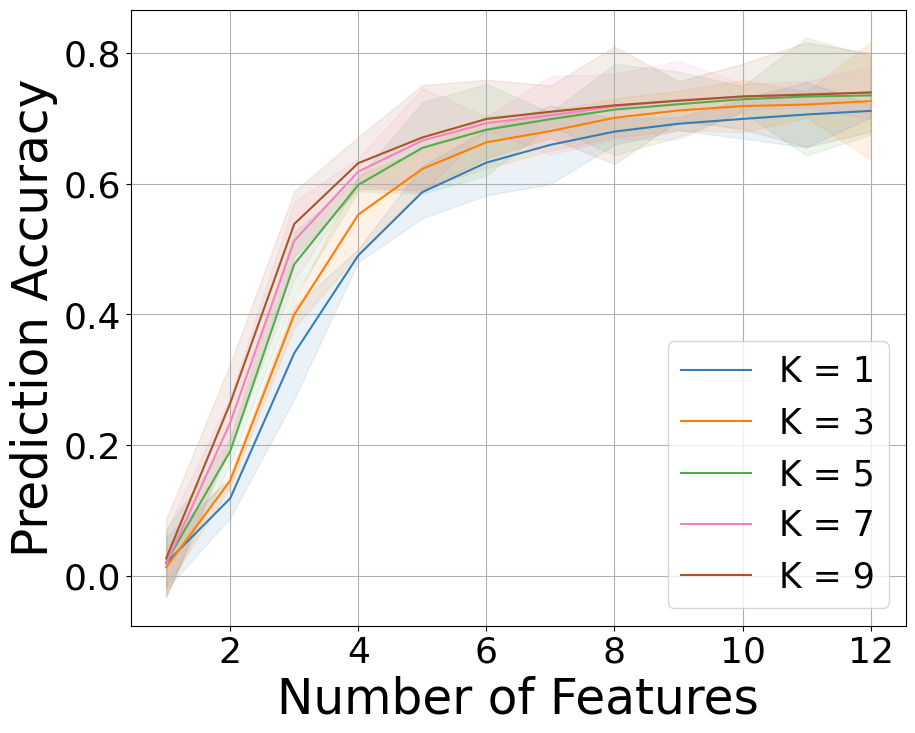}
        \caption{CIFAR-100}
        \label{fig:om_cifar100}
    \end{subfigure}
    \hfill

    \caption{The performance of \methodname\ with different choice of hyperparameters. Each of the curves corresponds to a value of $K$ ($K$ nearest neighbours), and the horizontal axis shows the number of features $M$ used for prediction.}
    \label{fig:sensitivity_analysis}
\end{figure}

\section{Conclusion}

\methodname\ demonstrates a novel form of interpretable machine learning, which performs decision-making through the similarity of the concepts within the test image to the corresponding concepts in the training set. We demonstrate that this allows both factual (\textit{Why did the model predict this class?}) and counterfactual analysis (\textit{How would the model explain the predictions if the alternative class was predicted?}). The experimental results show both a competitive accuracy of the method and demonstrate, empirically and theoretically, that the method has favourable properties of fidelity, necessity, sufficiency, and sparsity. Surprisingly, it also demonstrates impressive finetuning-free $k$-NN generalisation to new datasets.

It is also worth noting that the training dataset $\mathcal{D}$ from Algorithm \ref{alg:one} can be, without any changes in the method, be replaced with a trusted dataset, which can contain a private collection of data which the predictor could relate the testing image to. It can be motivated by both lack of access to training images or by lack of trust in the training data due to an inherent noise and can be especially useful in safety-critical domains such as medical imagery.  

While the current work only focuses on image classification, future work can expand this method to the segmentation scenarios. To reflect upon this, we demonstrate, in Appendix \ref{segmentation_section_appendix},  the potential for concept-based segmentation using \methodname. Such concept-based segmentation can enable human-in-the-loop decision-making: a human can change class-defining features in the model by selecting the corresponding segments. One can see the potential of the method to detect manipulated imaging and adversarial attacks by highlighting the areas of forgery and comparing them with training-set examples. We outline the limitations and broader impacts in Appendices \ref{broader_impacts} and \ref{limitations_section} respectively.

\section{Reproducibility Statement}

To ensure reproducibility of the results presented in this paper, we provide detailed information on the following key aspects:

\begin{itemize}
    \item \textbf{Datasets:} The experiments were conducted on well-known datasets, including CIFAR-10, CIFAR-100, CUB-200-2011, Stanford Cars, and Oxford-IIIT Pets. These datasets are publicly available and widely used in computer vision research.
    \item \textbf{Model Architectures:} We used $\operatorname{B-cos}$~(\cite{bohle2022b}) and ViT models, which are described in detail in both the main text and the Appendix. The model architecture, including any modifications for our method (\methodname), is fully explained, ensuring that the implementation can be reproduced by others. The training details are given in the Appendix \ref{bcos_hyperparameters}.
    \item \textbf{Training Details:} The model training process is described with exact hyperparameters provided in Appendix \ref{experimental_setup_section}. We also offer details on hardware used (e.g., GeForce RTX 2080 Ti with CUDA version 12.5) and software packages (e.g., NumPy, PyTorch, Torchvision), making it easy to replicate the experiments.
    \item \textbf{Evaluation Metrics:} All metrics used for evaluation, such as accuracy, fidelity, sparsity, and C-insertion/C-deletion metrics, are well-documented, ensuring consistency in reproducing the reported performance.
    
    \item \textbf{Code Availability:} To support reproducibility, we will provide the code used to conduct these experiments, including tables and analysis. This code, including all prompt templates and post-processing scripts, will be made publicly available upon publication.
\end{itemize}



\section{Acknowledgement}

This work was partially funded by ELSA – European Lighthouse on Secure and Safe AI funded by the European Union under grant agreement No. 101070617, PrivateAim from the German Federal Ministry of Education and Research under the funding code 01ZZ2316G and Image-Tox ZT-I-PF-4-037 supported by the impulse and networking fund of the Helmholtz Association. Views and opinions expressed are however those of the authors only and do not necessarily reflect those of the funding agencies and they can neither be held responsible for them. 

Dmitry Kangin acknowledges travel support from the European Union’s Horizon 2020 research and innovation programme under grant agreement No 951847.

\bibliography{iclr2025_conference}
\bibliographystyle{iclr2025_conference}

\newpage
\appendix

\section{B-cos Networks}
\label{sec:Appendix_BCos_description}

$\operatorname{B-cos}$ networks offer a novel approach to improving the interpretability of deep learning models by ensuring that input features align with the model's weights throughout training. This innovation arises from the realization that although deep neural networks excel in performance across various tasks, their internal workings remain largely opaque and hard to interpret. Typically, deep models rely on linear transformations coupled with non-linear activations, a design that contributes to their "black box" nature.

In contrast, $\operatorname{B-cos}$ networks replace traditional linear transformations with the $\operatorname{B-cos}$ transformation, which promotes alignment between inputs and weights. This transformation is defined as
\begin{equation}
\operatorname{B-cos}(\mathbf{x}; \mathbf{w}) = \|\mathbf{w}\| \|\mathbf{x}\| \cos^B(\theta) \cdot \text{sign}(\cos(\theta)),
\end{equation}
where \( \theta \) denotes the angle between the input vector \( \mathbf{x} \) and the weight vector \( \mathbf{w} \), and \( B \) is a hyperparameter that amplifies the model's sensitivity to alignment. This transformation shifts the model's focus from merely achieving high performance to fostering interpretability by emphasizing the relationship between the input data and model features.

The training of $\operatorname{B-cos}$ networks integrates this alignment directly into the optimization process. By applying alignment pressure during weight adjustment, $\operatorname{B-cos}$ networks encourage the model to align its weights with the most relevant input features, making this alignment a key objective rather than a byproduct of training, which is a departure from conventional methods focused solely on minimizing prediction error.

The integration of $\operatorname{B-cos}$ transformations into existing architectures is seamless since they can serve as direct replacements for typical linear layers. This compatibility enables the application of $\operatorname{B-cos}$ networks to a broad array of architectures such as VGG, ResNet, InceptionNet, DenseNet, and Vision Transformers (ViTs) ~\cite{bohle2024b}, without significant changes to their core structure. Empirical results demonstrate that this transformation maintains competitive performance on standard datasets like ImageNet while enhancing model interpretability.

During inference, a key advantage of $\operatorname{B-cos}$ networks becomes apparent. The sequence of $\operatorname{B-cos}$ transformations throughout the network simplifies into a single linear operation, as the successive alignment-focused layers collapse into a single transformation. Mathematically, this is expressed as:
\begin{equation}
W_{1\rightarrow L}(\mathbf{x}) = W_1 \times W_2 \times \ldots \times W_L,
\end{equation}
where \( W_{1\rightarrow L}(\mathbf{x}) \) represents the effective weight matrix over all \( L \) layers, and \( W_1, W_2, \ldots, W_L \) are the weight matrices of the individual $\operatorname{B-cos}$ layers. This reduces the network’s computation at test time to:
\begin{equation}
y =  W_{1\rightarrow L}(\mathbf{x}) \cdot \mathbf{x},
\end{equation}

where \( y \) is the output. This reduction to a single matrix-vector multiplication significantly improves both computational efficiency and transparency, offering a clear view of how input features affect the output. The network's behavior, represented by $\theta_{1\rightarrow L}$ in the main text, becomes fully interpretable, as emphasized in~\cite{bohle2022b}.

\section{Experimental Setup}
\label{experimental_setup_section}
\subsection{Hardware and software configuration}
We trained and tested our models using GeForce RTX 2080 Ti with CUDA version 12.5. In our work we use the following software packages for Python:
\begin{enumerate}
\item NumPy 1.26.2
\item PyTorch 2.1.2
\item Torchvision 0.16.2
\item Matplotlib 3.8.2
\end{enumerate}

\subsection{Model Training and Evaluation Details}

Except for zero-shot learning settings, the $\operatorname{B-cos}$ models has been trained with the  hyperparameters outlined in Table \ref{bcos_hyperparameters}. For more details on the meaning of  the training parameters for the B-cos model, please follow the work \cite{bohle2024b}.

For the pretrained baselines and models, we use the models from the open sources, which can be downloaded from the following repository: \href{https://github.com/B-cos/B-cos-v2}{https://github.com/B-cos/B-cos-v2}. For $\operatorname{B-cos/ViT}$, we use \texttt{vitc\_l\_patch1\_14} model  pretrained on ImageNet

\begin{table}
\centering
 \caption{Hyperparameters of the $\operatorname{B-cos}$ model training}
 \begin{tabular}{ccc} 
 \toprule
Hyperparameter &  Value & Comments\\ [0.5ex] 
\midrule
Learning rate & $0.01$ & \\ 
Max \# epochs & $500$ & with early stopping \\ 
Batch size & $16$ & \\ 
Drop out value  & $0.5$ &  \\ 
B-value in $\operatorname{B-cos}$ & $1.5$ &  \\ 
Loss & BCE Loss & \\ 
Final activation & sigmoid & \\ 
 \bottomrule
 \end{tabular}
 \label{bcos_hyperparameters}
\end{table}

\section{Non-finetuned model performance}
\label{performance_nf}

Table \ref{filtered_accuracy_eval_datasets} demonstrates comparative performance between the finetuned and non-finetuned method. 
\begin{table}[]
    \centering
 \caption{Comparison between the performance of  finetuned and non-finetuned model based on  $\operatorname{B-cos/Vit}$ backbone}
\begin{tabular}{lcc} 
 \toprule
\textbf{Dataset} & \textbf{\methodname} & \textbf{\methodname} (no finetuning) \\ [0.5ex] 
\midrule
 Oxford-IIIT Pets & $87.73$ & $85.52 $ \\ 
 CUB-200-2011 & $74.14$ & $70.04 $  \\ 
 Stanford Cars & $86.81$ & $84.03$ \\ 
 CIFAR-10 & $91.21$ & $90.04 $ \\ 
 CIFAR-100 & $76.42$ & $72.92 $ \\ 
 \bottomrule
\end{tabular}
\label{filtered_accuracy_eval_datasets}
\end{table}

\newpage

\section{Further details on hyperparameter choice}
\label{Appendix_hyperparameter_choice}

In Figure \ref{fig:sensitivity_analysis_2} we present the hyperparameter sensitivity analysis graphs for the $\ell^2$ distances between the whole set of CDFs. The experimental scheme is given in Figure \ref{fig:method_2}.

\begin{figure}
    \includegraphics[width=\textwidth]{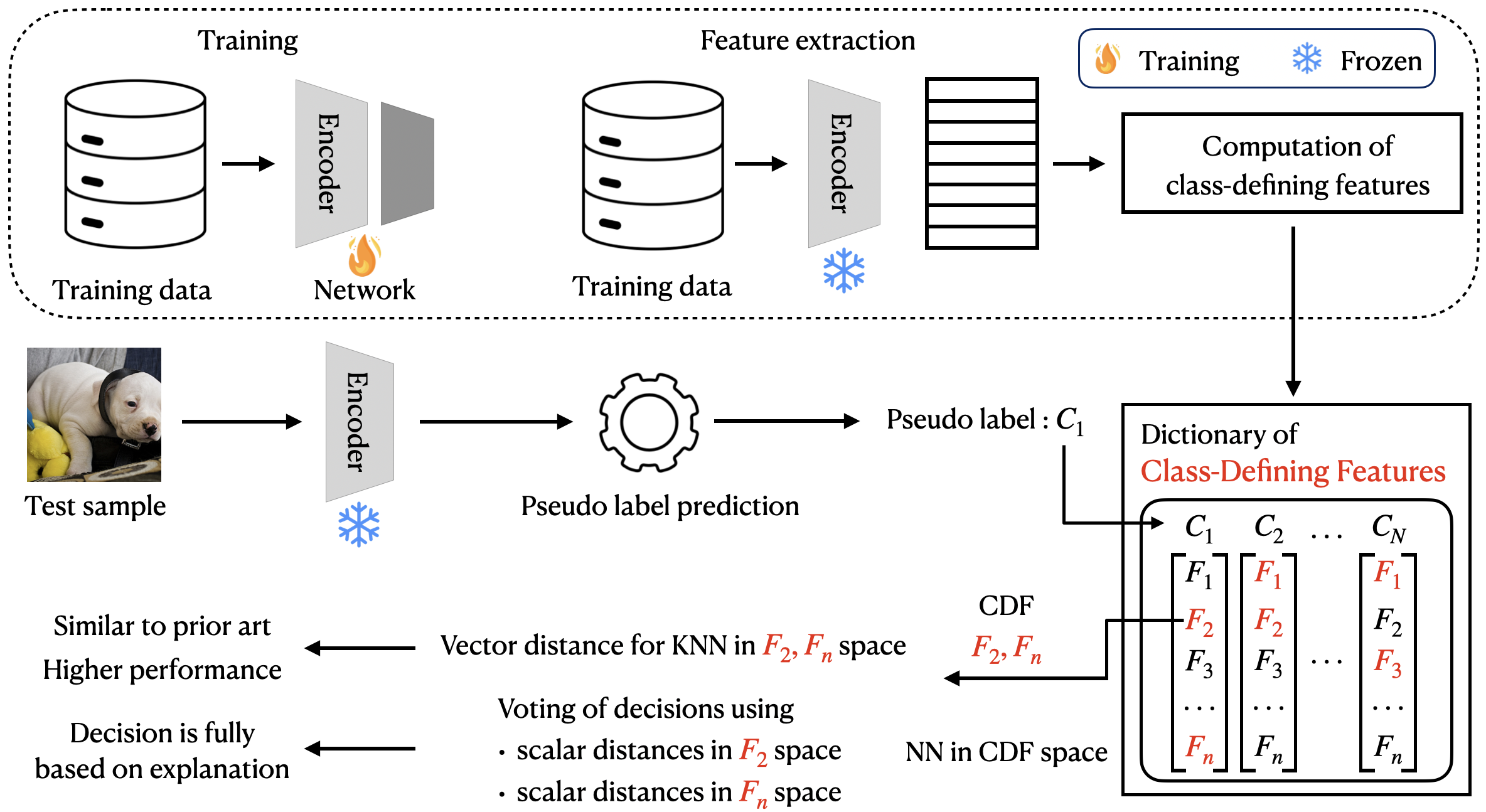}
    \caption{A method ablation for for sensitivity analysis, which uses the $\ell^2$ distances}
    \label{fig:method_2}
\end{figure}

\begin{figure}
    \centering
    \begin{subfigure}[b]{0.25\textwidth}
        \centering
        \includegraphics[width=\textwidth]{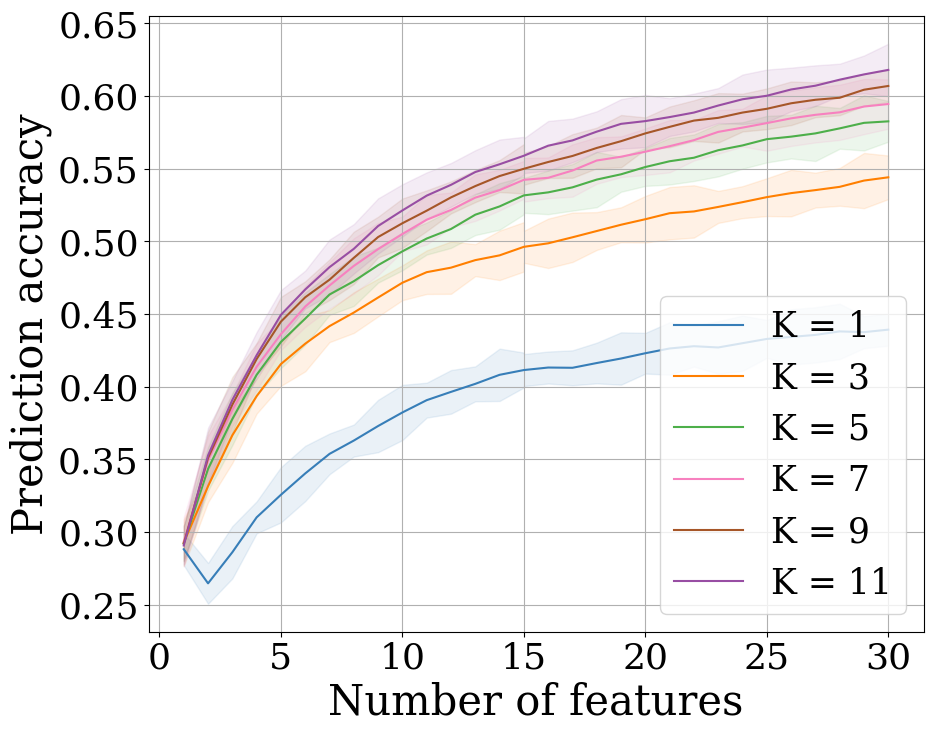}
        \caption{CUBS}
        \label{fig:om_cubs2}
    \end{subfigure}
    \hfill
    \begin{subfigure}[b]{0.25\textwidth}
        \centering
        
        \includegraphics[width=\textwidth]{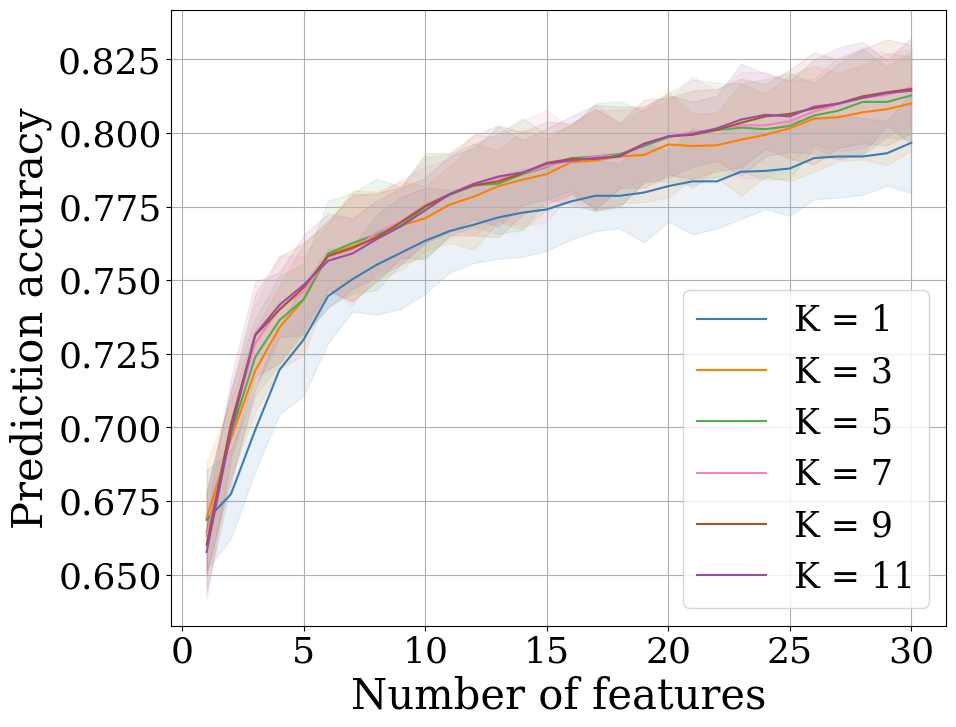}
        \caption{Oxford-IIIT Pets}
        \label{fig:om_pets_2}
    \end{subfigure}
    \hfill
    \begin{subfigure}[b]{0.25\textwidth}
        \centering
        \includegraphics[width=\textwidth]{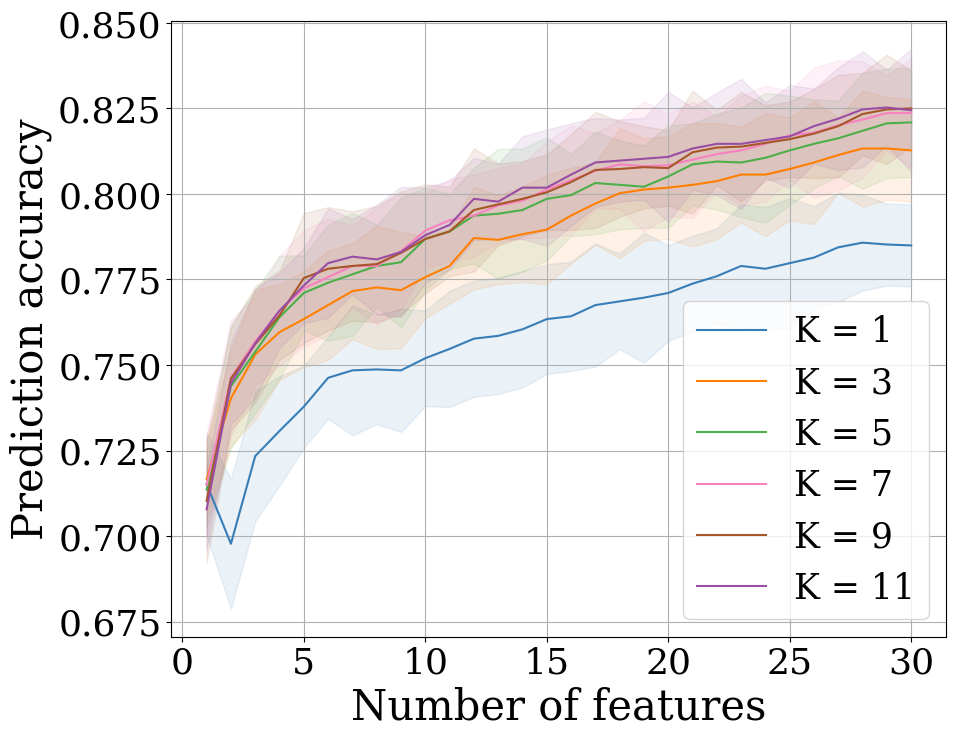}
        \caption{Stanford-Cars}
        \label{fig:om_cars_2}
    \end{subfigure}
    \hfill
    \begin{subfigure}[b]{0.25\textwidth}
        \centering
        \includegraphics[width=\textwidth]{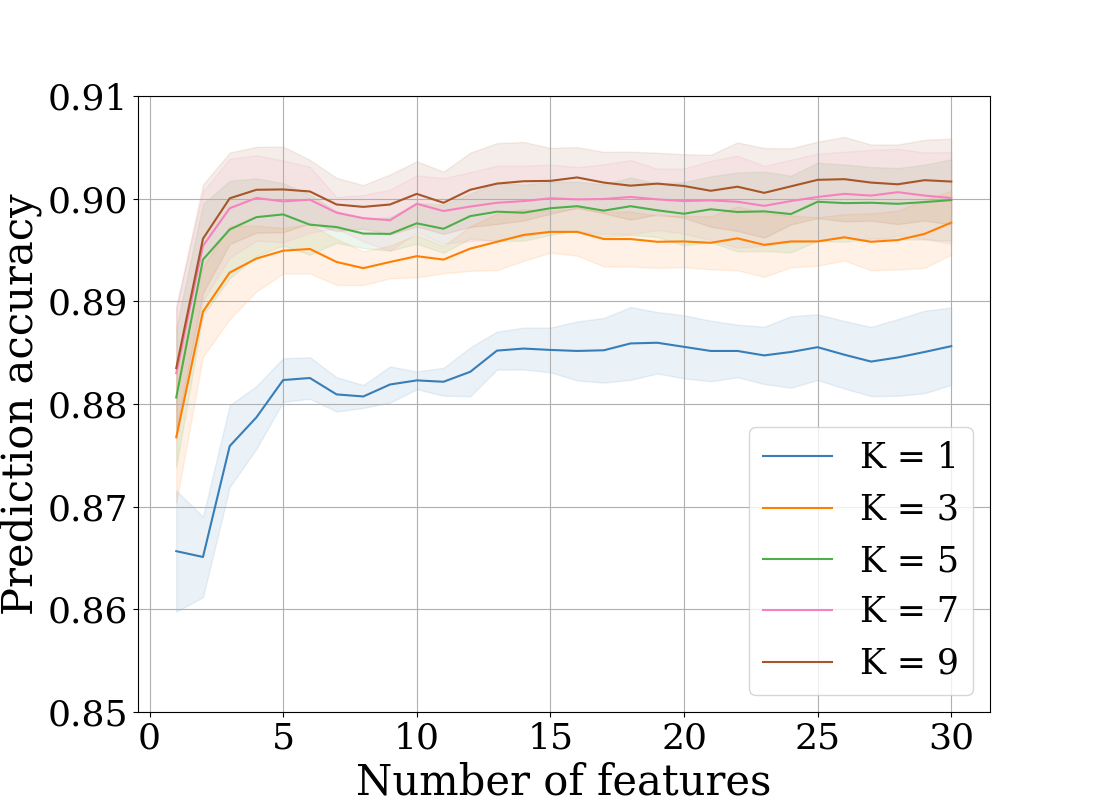}
        \caption{CIFAR-10}
        \label{fig:om_cifar10_2}
    \end{subfigure}
    \hfill
    \begin{subfigure}[b]{0.25\textwidth}
        \centering
        \includegraphics[width=\textwidth]{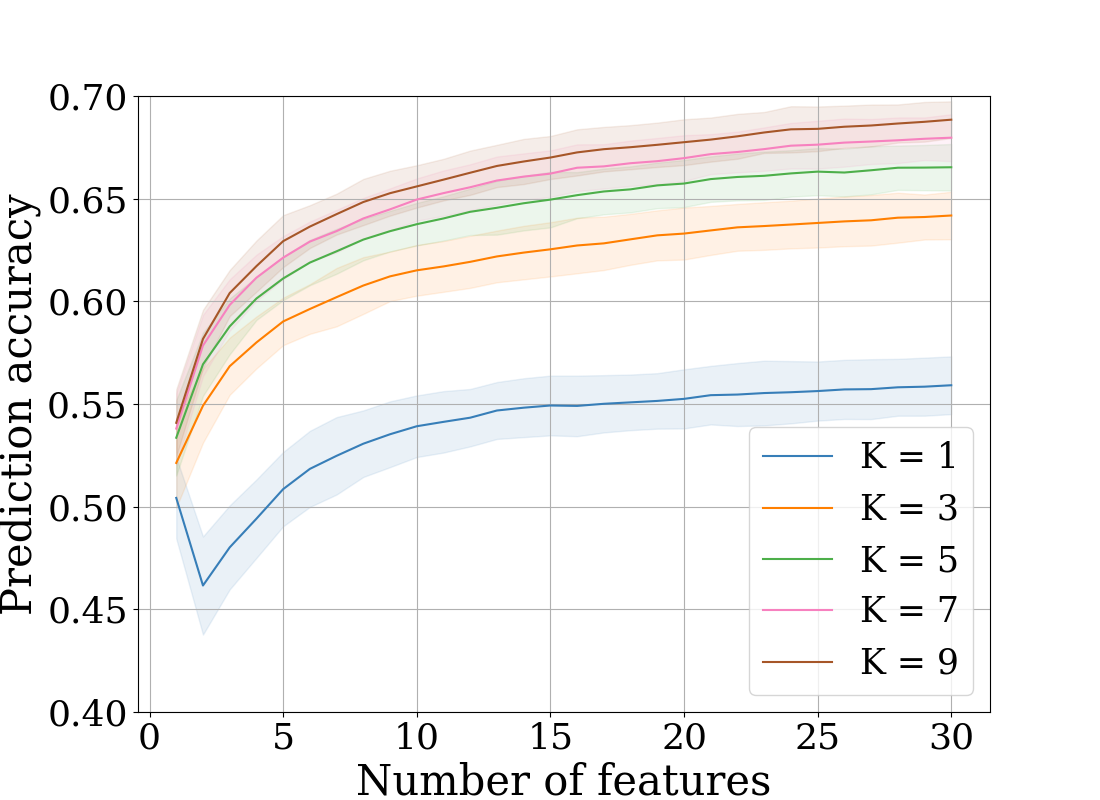}
        \caption{CIFAR 100}
        \label{fig:om_cifar100_2}
    \end{subfigure}
    \hfill
    \begin{subfigure}[b]{0.25\textwidth}
        \centering
        \includegraphics[width=\textwidth]{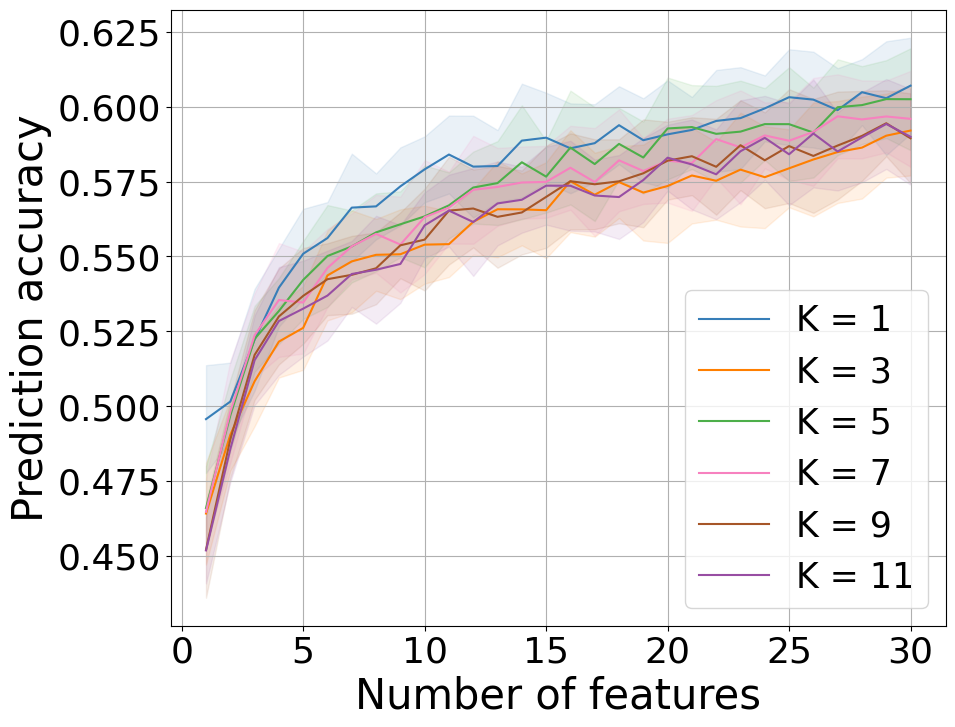}
        \caption{ImageNet}
        \label{fig:om_imagenet_2}
    \end{subfigure}
    
    \caption{The performance of \methodname\ with different choices of hyperparameters. Each of the lines corresponds to a value of $K$ ($K$ nearest neighbours), and the horizontal axis shows the number of features $M$ used for prediction.}
    \label{fig:sensitivity_analysis_2}
\end{figure}

\newpage

\section{Additional Qualitative Results}
\label{additional_qualitative_results}
%
    
%
    

\begin{figure}[h]
\begin{subfigure}{\textwidth}
\resizebox{\textwidth}{!}{
\begin{tikzpicture}[domain=0:15]
\node at (-3,1.25) {Testing image};
\node at (-3, 0) {\includegraphics[height=2cm]{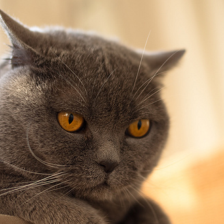}};

\node at (-3,-1.25) {Class-defining feature interpretations (testing image)};
\node at (-10, -2.5) {\includegraphics[height=2cm]{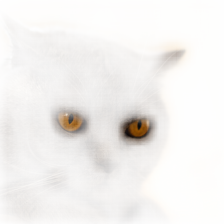}};
\node at (-8, -2.5) {\includegraphics[height=2cm]{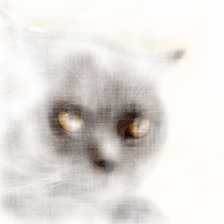}};
\node at (-6, -2.5) {\includegraphics[height=2cm]{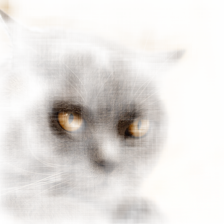}};
\node at (-4, -2.5) {\includegraphics[height=2cm]{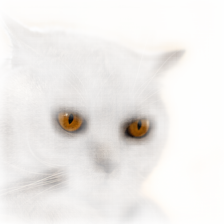}};
\node at (-2, -2.5) {\includegraphics[height=2cm]{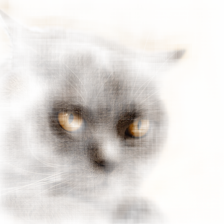}};
\node at (0, -2.5) {\includegraphics[height=2cm]{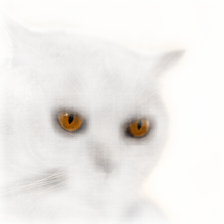}};
\node at (2, -2.5) {\includegraphics[height=2cm]{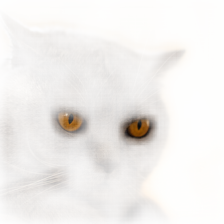}};
\node at (4, -2.5) {\includegraphics[height=2cm]{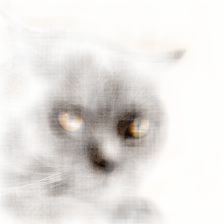}};
\draw[<->, line width=0.5mm] (-10,-3.5) -- (-10,-4);
\draw[<->, line width=0.5mm] (-8,-3.5) -- (-8,-4);
\draw[<->, line width=0.5mm] (-6,-3.5) -- (-6,-4);
\draw[<->, line width=0.5mm] (-4,-3.5) -- (-4,-4);
\draw[<->, line width=0.5mm] (-2,-3.5) -- (-2,-4);
\draw[<->, line width=0.5mm] (0,-3.5)  -- (0,-4);
\draw[<->, line width=0.5mm] (2,-3.5)  -- (2,-4);
\draw[<->, line width=0.5mm] (4,-3.5)  -- (4,-4);

\node at (-3,-6.25) {Class-defining feature interpretations (training images)};
\node at (-10, -5){\includegraphics[height=2cm]{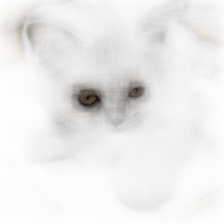}};
\node at (-8, -5){\includegraphics[height=2cm]{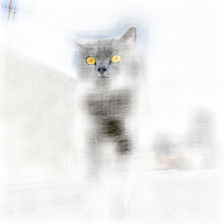}};
\node at (-6, -5){\includegraphics[height=2cm]{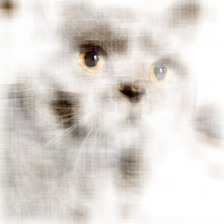}};
\node at (-4, -5){\includegraphics[height=2cm]{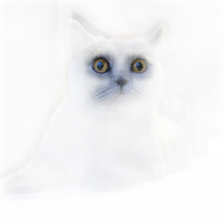}};
\node at (-2, -5){\includegraphics[height=2cm]{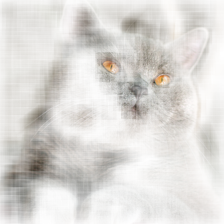}};
\node at (0, -5){\includegraphics[height=2cm]{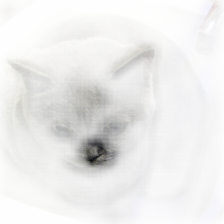}};
\node at (2, -5){\includegraphics[height=2cm]{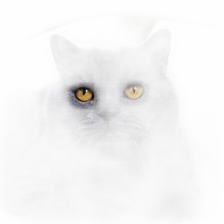}};
\node at (4, -5){\includegraphics[height=2cm]{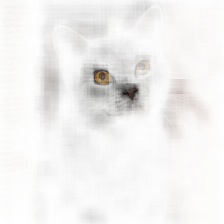}};

\node at (-10,-9) {\makecell{British\\ shorthair}};
\node at (-8,-9) {\makecell{British\\ shorthair}};
\node at (-6,-9) {\makecell{British\\ shorthair}};
\node at (-4,-9) {\makecell{British\\ shorthair}};
\node at (-2,-9) {\makecell{British\\ shorthair}};
\node at (0,-9) {\makecell{British\\ shorthair}};
\node at (2,-9) {\makecell{British\\ shorthair}};
\node at (4,-9) {\makecell{British\\ shorthair}};

\node at (-3,-9.75) {Training data};

\node at (-10, -7.5) {\includegraphics[height=2cm]{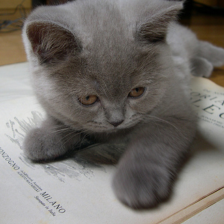}};
\node at (-8, -7.5) {\includegraphics[height=2cm]{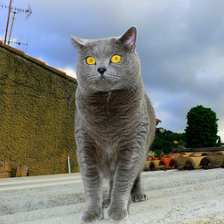}};
\node at (-6, -7.5) {\includegraphics[height=2cm]{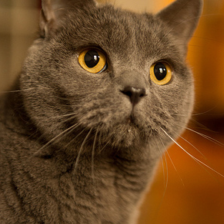}};
\node at (-4, -7.5) {\includegraphics[height=2cm]{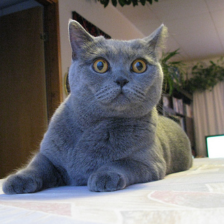}};
\node at (-2, -7.5) {\includegraphics[height=2cm]{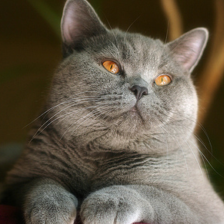}};
\node at (0, -7.5) {\includegraphics[height=2cm]{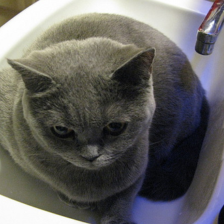}};
\node at (2, -7.5) {\includegraphics[height=2cm]{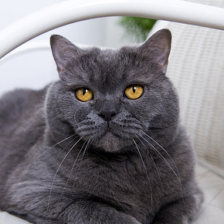}};
\node at (4, -7.5) {\includegraphics[height=2cm]{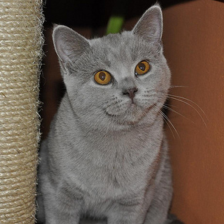}};

\end{tikzpicture}
}
\end{subfigure}
\begin{subfigure}{\textwidth}
\resizebox{\textwidth}{!}{
\begin{tikzpicture}[domain=0:15]
\node at (-3,1.25) {Testing image};
\node at (-3, 0) {\includegraphics[height=2cm]{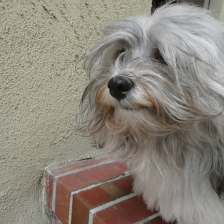}};

\node at (-3,-1.25) {Class-defining feature interpretations (testing image)};
\node at (-10, -2.5) {\includegraphics[height=2cm]{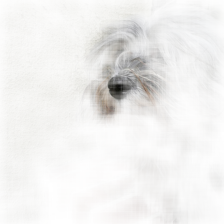}};
\node at (-8, -2.5) {\includegraphics[height=2cm]{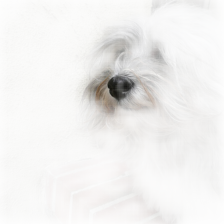}};
\node at (-6, -2.5) {\includegraphics[height=2cm]{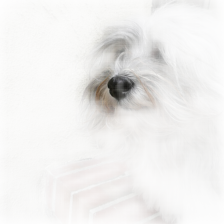}};
\node at (-4, -2.5) {\includegraphics[height=2cm]{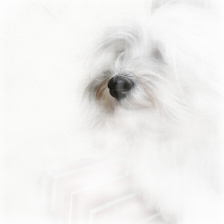}};
\node at (-2, -2.5) {\includegraphics[height=2cm]{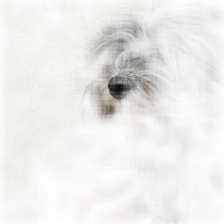}};
\node at (0, -2.5) {\includegraphics[height=2cm]{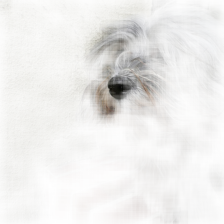}};
\node at (2, -2.5) {\includegraphics[height=2cm]{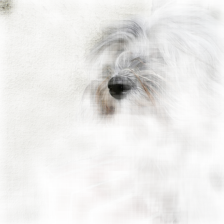}};
\node at (4, -2.5) {\includegraphics[height=2cm]{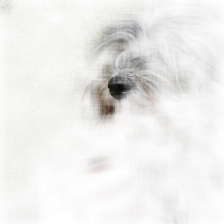}};
\draw[<->, line width=0.5mm] (-10,-3.5) -- (-10,-4);
\draw[<->, line width=0.5mm] (-8,-3.5) -- (-8,-4);
\draw[<->, line width=0.5mm] (-6,-3.5) -- (-6,-4);
\draw[<->, line width=0.5mm] (-4,-3.5) -- (-4,-4);
\draw[<->, line width=0.5mm] (-2,-3.5) -- (-2,-4);
\draw[<->, line width=0.5mm] (0,-3.5)  -- (0,-4);
\draw[<->, line width=0.5mm] (2,-3.5)  -- (2,-4);
\draw[<->, line width=0.5mm] (4,-3.5)  -- (4,-4);

\node at (-3,-6.25) {Class-defining feature interpretations (training images)};
\node at (-10, -5){\includegraphics[height=2cm]{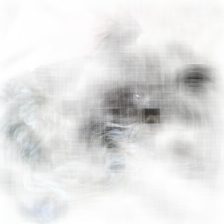}};
\node at (-8, -5){\includegraphics[height=2cm]{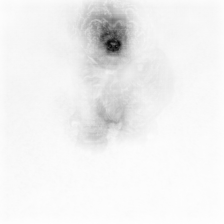}};
\node at (-6, -5){\includegraphics[height=2cm]{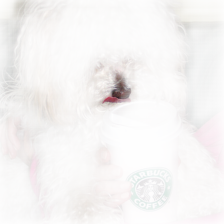}};
\node at (-4, -5){\includegraphics[height=2cm]{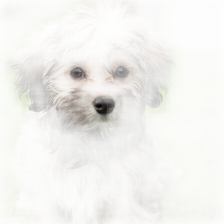}};
\node at (-2, -5){\includegraphics[height=2cm]{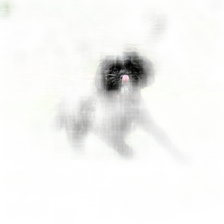}};
\node at (0, -5){\includegraphics[height=2cm]{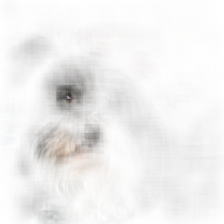}};
\node at (2, -5){\includegraphics[height=2cm]{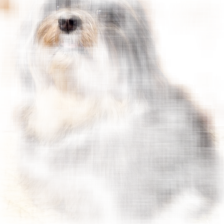}};
\node at (4, -5){\includegraphics[height=2cm]{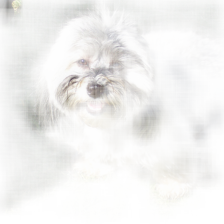}};

\node at (-10,-8.75) {\makecell{Havanese}};
\node at (-8,-8.75) {\makecell{Havanese}};
\node at (-6,-8.75) {\makecell{Havanese}};
\node at (-4,-8.75) {\makecell{Havanese}};
\node at (-2,-8.75) {\makecell{Havanese}};
\node at (0,-8.75) {\makecell{Havanese}};
\node at (2,-8.75) {\makecell{Havanese}};
\node at (4,-8.75) {\makecell{Havanese}};

\node at (-3,-9.25) {Training data};

\node at (-10, -7.5) {\includegraphics[height=2cm]{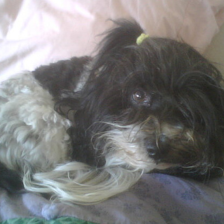}};
\node at (-8, -7.5) {\includegraphics[height=2cm]{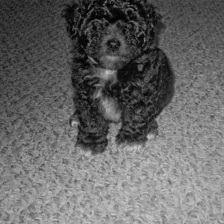}};
\node at (-6, -7.5) {\includegraphics[height=2cm]{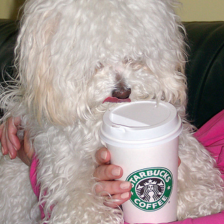}};
\node at (-4, -7.5) {\includegraphics[height=2cm]{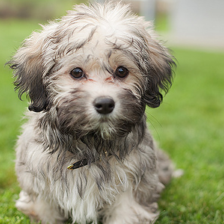}};
\node at (-2, -7.5) {\includegraphics[height=2cm]{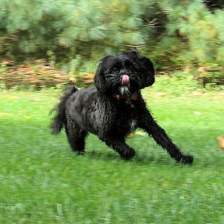}};
\node at (0, -7.5) {\includegraphics[height=2cm]{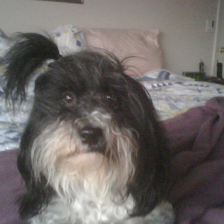}};
\node at (2, -7.5) {\includegraphics[height=2cm]{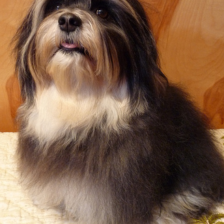}};
\node at (4, -7.5) {\includegraphics[height=2cm]{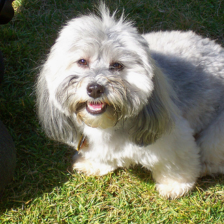}};

\end{tikzpicture}
}
\end{subfigure}
\caption{Additional qualitative results (Oxford-IIIT Pets)}
    \label{qualitative_results_appendix}
\end{figure}

\begin{figure}[h]
\begin{subfigure}{\textwidth}
\resizebox{\textwidth}{!}{
\begin{tikzpicture}[domain=0:15]
\node at (-3,1.25) {Testing image};
\node at (-3, 0) {\includegraphics[height=2cm]{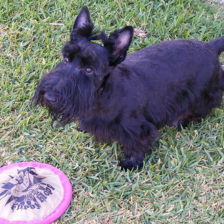}};

\node at (-3,-1.25) {Class-defining feature interpretations (testing image)};
\node at (-10, -2.5) {\includegraphics[height=2cm]{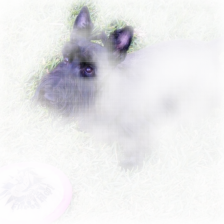}};
\node at (-8, -2.5) {\includegraphics[height=2cm]{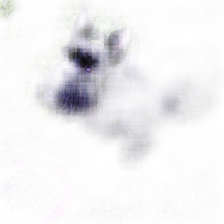}};
\node at (-6, -2.5) {\includegraphics[height=2cm]{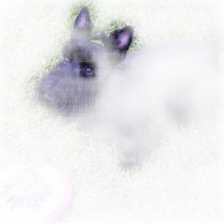}};
\node at (-4, -2.5) {\includegraphics[height=2cm]{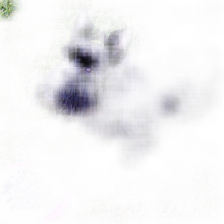}};
\node at (-2, -2.5) {\includegraphics[height=2cm]{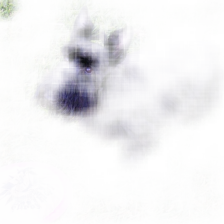}};
\node at (0, -2.5) {\includegraphics[height=2cm]{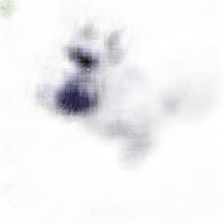}};
\node at (2, -2.5) {\includegraphics[height=2cm]{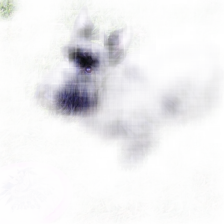}};
\node at (4, -2.5) {\includegraphics[height=2cm]{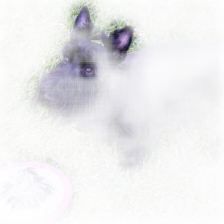}};
\draw[<->, line width=0.5mm] (-10,-3.5) -- (-10,-4);
\draw[<->, line width=0.5mm] (-8,-3.5) -- (-8,-4);
\draw[<->, line width=0.5mm] (-6,-3.5) -- (-6,-4);
\draw[<->, line width=0.5mm] (-4,-3.5) -- (-4,-4);
\draw[<->, line width=0.5mm] (-2,-3.5) -- (-2,-4);
\draw[<->, line width=0.5mm] (0,-3.5)  -- (0,-4);
\draw[<->, line width=0.5mm] (2,-3.5)  -- (2,-4);
\draw[<->, line width=0.5mm] (4,-3.5)  -- (4,-4);

\node at (-3,-6.25) {Class-defining feature interpretations (training images)};
\node at (-10, -5){\includegraphics[height=2cm]{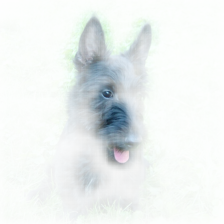}};
\node at (-8, -5){\includegraphics[height=2cm]{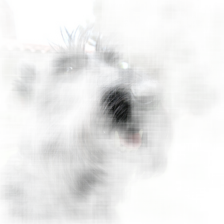}};
\node at (-6, -5){\includegraphics[height=2cm]{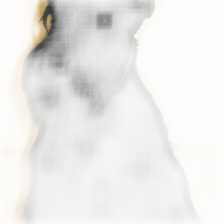}};
\node at (-4, -5){\includegraphics[height=2cm]{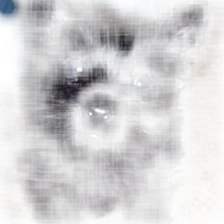}};
\node at (-2, -5){\includegraphics[height=2cm]{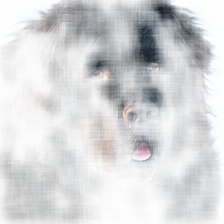}};
\node at (0, -5){\includegraphics[height=2cm]{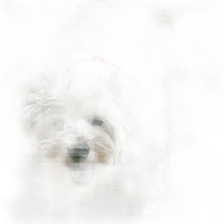}};
\node at (2, -5){\includegraphics[height=2cm]{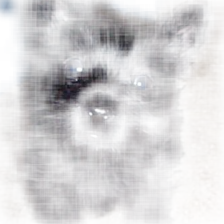}};
\node at (4, -5){\includegraphics[height=2cm]{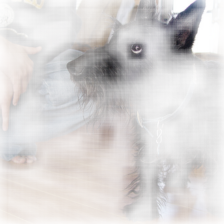}};

\node at (-10,-9) {\makecell{Scottish\\Terrier}};
\node at (-8,-9) {\makecell{Scottish\\Terrier}};
\node at (-6,-9) {\makecell{Scottish\\Terrier}};
\node at (-4,-9) {\makecell{Scottish\\Terrier}};
\node at (-2,-9) {\makecell{Newfoundland}};
\node at (0,-9) {\makecell{Scottish\\Terrier}};
\node at (2,-9) {\makecell{Scottish\\Terrier}};
\node at (4,-9) {\makecell{Scottish\\Terrier}};

\node at (-3,-9.75) {Training data};

\node at (-10, -7.5) {\includegraphics[height=2cm]{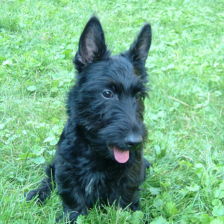}};
\node at (-8, -7.5) {\includegraphics[height=2cm]{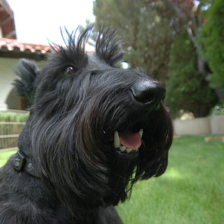}};
\node at (-6, -7.5) {\includegraphics[height=2cm]{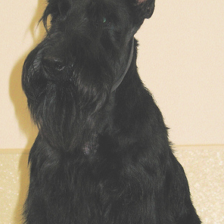}};
\node at (-4, -7.5) {\includegraphics[height=2cm]{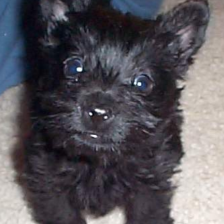}};
\node at (-2, -7.5) {\includegraphics[height=2cm]{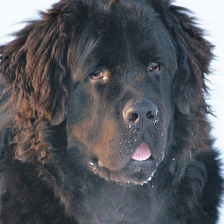}};
\node at (0, -7.5) {\includegraphics[height=2cm]{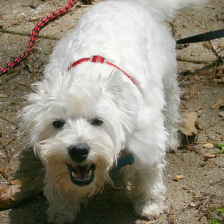}};
\node at (2, -7.5) {\includegraphics[height=2cm]{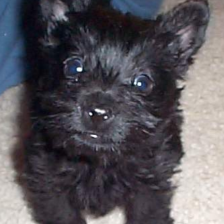}};
\node at (4, -7.5) {\includegraphics[height=2cm]{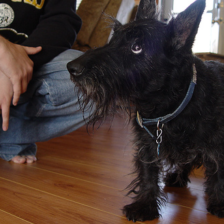}};

\draw [draw=red, line width=0.5mm] (-3,-6.5) rectangle (-1,-8.5);
\draw [draw=red, line width=0.5mm] (-3,-4) rectangle (-1,-6);

\end{tikzpicture}
}
\end{subfigure}
\begin{subfigure}{\textwidth}
\resizebox{\textwidth}{!}{
\begin{tikzpicture}[domain=0:15]
\node at (-3,1.25) {Testing image};
\node at (-3, 0) {\includegraphics[height=2cm]{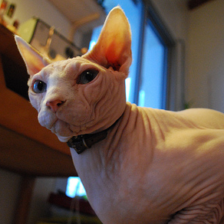}};

\node at (-3,-1.25) {Class-defining feature interpretations (testing image)};
\node at (-10, -2.5) {\includegraphics[height=2cm]{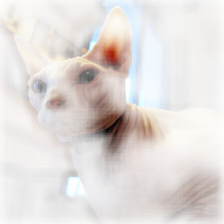}};
\node at (-8, -2.5) {\includegraphics[height=2cm]{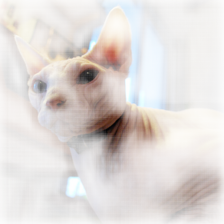}};
\node at (-6, -2.5) {\includegraphics[height=2cm]{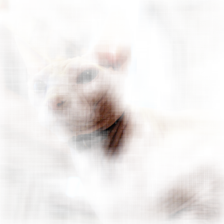}};
\node at (-4, -2.5) {\includegraphics[height=2cm]{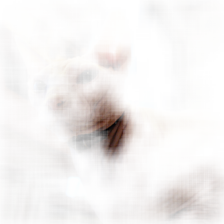}};
\node at (-2, -2.5) {\includegraphics[height=2cm]{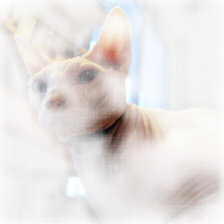}};
\node at (0, -2.5) {\includegraphics[height=2cm]{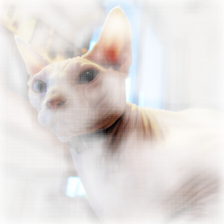}};
\node at (2, -2.5) {\includegraphics[height=2cm]{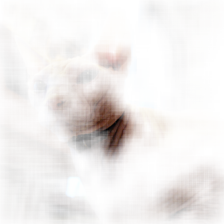}};
\node at (4, -2.5) {\includegraphics[height=2cm]{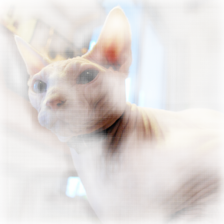}};
\draw[<->, line width=0.5mm] (-10,-3.5) -- (-10,-4);
\draw[<->, line width=0.5mm] (-8,-3.5) -- (-8,-4);
\draw[<->, line width=0.5mm] (-6,-3.5) -- (-6,-4);
\draw[<->, line width=0.5mm] (-4,-3.5) -- (-4,-4);
\draw[<->, line width=0.5mm] (-2,-3.5) -- (-2,-4);
\draw[<->, line width=0.5mm] (0,-3.5)  -- (0,-4);
\draw[<->, line width=0.5mm] (2,-3.5)  -- (2,-4);
\draw[<->, line width=0.5mm] (4,-3.5)  -- (4,-4);

\node at (-3,-6.25) {Class-defining feature interpretations (training images)};
\node at (-10, -5){\includegraphics[height=2cm]{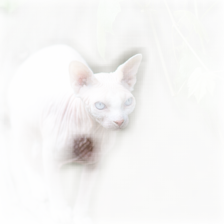}};
\node at (-8, -5){\includegraphics[height=2cm]{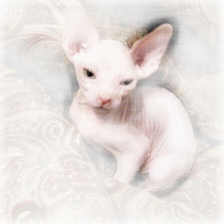}};
\node at (-6, -5){\includegraphics[height=2cm]{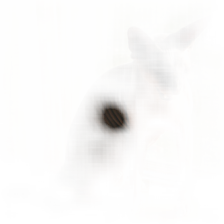}};
\node at (-4, -5){\includegraphics[height=2cm]{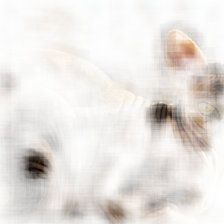}};
\node at (-2, -5){\includegraphics[height=2cm]{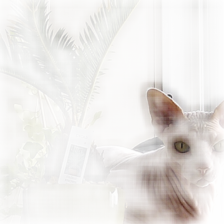}};
\node at (0, -5){\includegraphics[height=2cm]{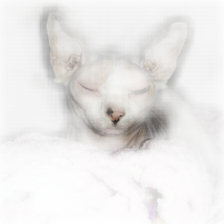}};
\node at (2, -5){\includegraphics[height=2cm]{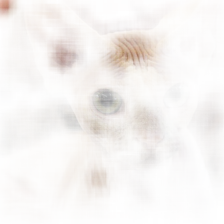}};
\node at (4, -5){\includegraphics[height=2cm]{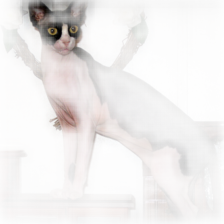}};

\node at (-10,-8.75) {\makecell{Sphynx}};
\node at (-8,-8.75) {\makecell{Sphynx}};
\node at (-6,-8.75) {\makecell{Sphynx}};
\node at (-4,-8.75) {\makecell{Sphynx}};
\node at (-2,-8.75) {\makecell{Sphynx}};
\node at (0,-8.75) {\makecell{Sphynx}};
\node at (2,-8.75) {\makecell{Sphynx}};
\node at (4,-8.75) {\makecell{Sphynx}};

\node at (-3,-9.25) {Training data};

\node at (-10, -7.5) {\includegraphics[height=2cm]{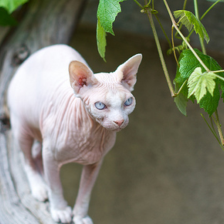}};
\node at (-8, -7.5) {\includegraphics[height=2cm]{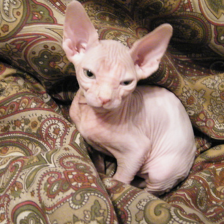}};
\node at (-6, -7.5) {\includegraphics[height=2cm]{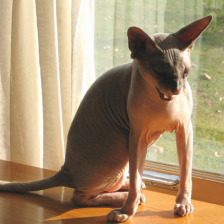}};
\node at (-4, -7.5) {\includegraphics[height=2cm]{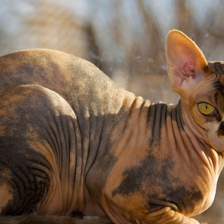}};
\node at (-2, -7.5) {\includegraphics[height=2cm]{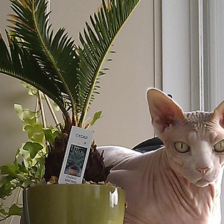}};
\node at (0, -7.5) {\includegraphics[height=2cm]{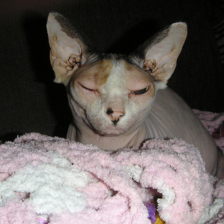}};
\node at (2, -7.5) {\includegraphics[height=2cm]{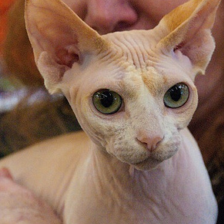}};
\node at (4, -7.5) {\includegraphics[height=2cm]{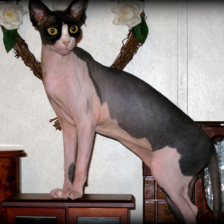}};

\end{tikzpicture}
}
\end{subfigure}
\caption{Additional qualitative results  (Oxford-IIIT Pets)}
    \label{qualitative_results_appendix2}
\end{figure}

\begin{figure}[h]
\begin{subfigure}{\textwidth}
\resizebox{\textwidth}{!}{
\begin{tikzpicture}[domain=0:15]
\node at (-3,1.25) {Testing image};
\node at (-3, 0) {\includegraphics[height=2cm]{figures_cub/1946/test_image.png}};

\node at (-3,-1.25) {Class-defining feature interpretations (testing image)};
\node at (-10, -2.5) {\includegraphics[height=2cm]{figures_cub/1946/test_explanation_1.png}};
\node at (-8, -2.5) {\includegraphics[height=2cm]{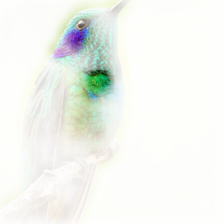}};
\node at (-6, -2.5) {\includegraphics[height=2cm]{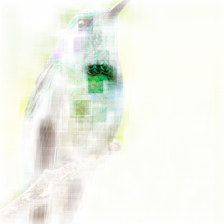}};
\node at (-4, -2.5) {\includegraphics[height=2cm]{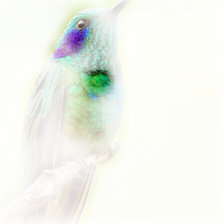}};
\node at (-2, -2.5) {\includegraphics[height=2cm]{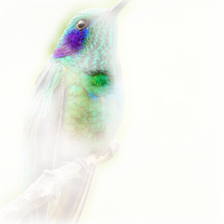}};
\node at (0, -2.5) {\includegraphics[height=2cm]{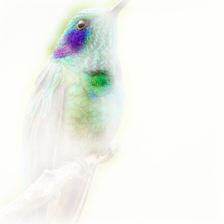}};
\node at (2, -2.5) {\includegraphics[height=2cm]{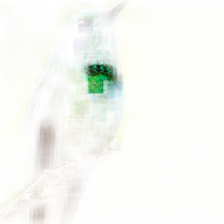}};
\node at (4, -2.5) {\includegraphics[height=2cm]{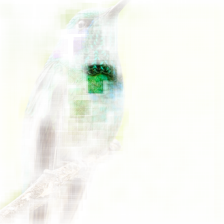}};
\draw[<->, line width=0.5mm] (-10,-3.5) -- (-10,-4);
\draw[<->, line width=0.5mm] (-8,-3.5) -- (-8,-4);
\draw[<->, line width=0.5mm] (-6,-3.5) -- (-6,-4);
\draw[<->, line width=0.5mm] (-4,-3.5) -- (-4,-4);
\draw[<->, line width=0.5mm] (-2,-3.5) -- (-2,-4);
\draw[<->, line width=0.5mm] (0,-3.5)  -- (0,-4);
\draw[<->, line width=0.5mm] (2,-3.5)  -- (2,-4);
\draw[<->, line width=0.5mm] (4,-3.5)  -- (4,-4);

\node at (-3,-6.25) {Class-defining feature interpretations (training images)};
\node at (-10, -5){\includegraphics[height=2cm]{figures_cub/1946/train_explanation_1.png}};
\node at (-8, -5){\includegraphics[height=2cm]{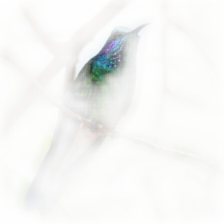}};
\node at (-6, -5){\includegraphics[height=2cm]{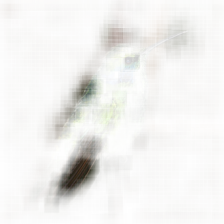}};
\node at (-4, -5){\includegraphics[height=2cm]{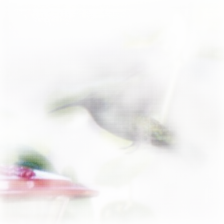}};
\node at (-2, -5){\includegraphics[height=2cm]{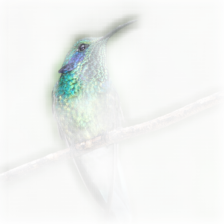}};
\node at (0, -5){\includegraphics[height=2cm]{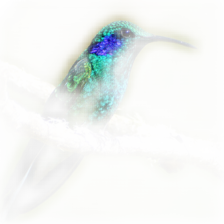}};
\node at (2, -5){\includegraphics[height=2cm]{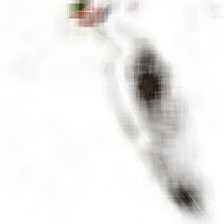}};
\node at (4, -5){\includegraphics[height=2cm]{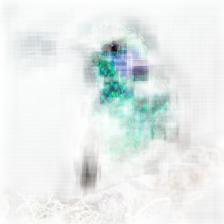}};

\node at (-10,-9) {\makecell{Green\\Violetear}};
\node at (-8,-9) {\makecell{Green\\Violetear}};
\node at (-6,-9) {\makecell{Green\\Violetear}};
\node at (-4,-9) {\makecell{Green\\Violetear}};
\node at (-2,-9) {\makecell{Green\\Violetear}};
\node at (0,-9) {\makecell{Green\\Violetear}};
\node at (2,-9) {\makecell{Pileated\\Woodpecker}};
\node at (4,-9) {\makecell{Green\\Violetear}};

\node at (-3,-9.75) {Training data};

\node at (-10, -7.5) {\includegraphics[height=2cm]{figures_cub/1946/train_image_1.png}};
\node at (-8, -7.5) {\includegraphics[height=2cm]{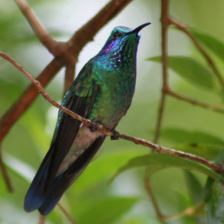}};
\node at (-6, -7.5) {\includegraphics[height=2cm]{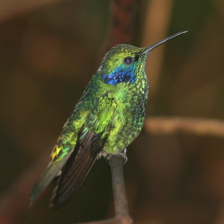}};
\node at (-4, -7.5) {\includegraphics[height=2cm]{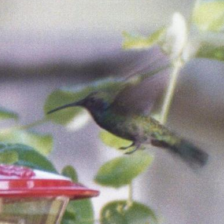}};
\node at (-2, -7.5) {\includegraphics[height=2cm]{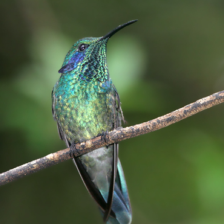}};
\node at (0, -7.5) {\includegraphics[height=2cm]{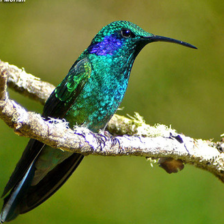}};
\node at (2, -7.5) {\includegraphics[height=2cm]{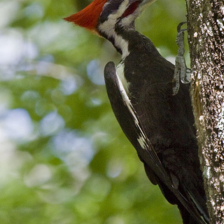}};
\node at (4, -7.5) {\includegraphics[height=2cm]{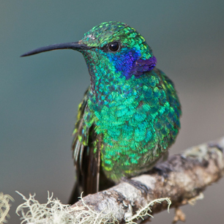}};

\draw [draw=red, line width=0.5mm] (1,-6.5) rectangle (3,-8.5);
\draw [draw=red, line width=0.5mm] (1,-4) rectangle (3,-6);

\end{tikzpicture}
}
\end{subfigure}
\begin{subfigure}{\textwidth}
\resizebox{\textwidth}{!}{
\begin{tikzpicture}[domain=0:15]
\node at (-3,1.25) {Testing image};
\node at (-3, 0) {\includegraphics[height=2cm]{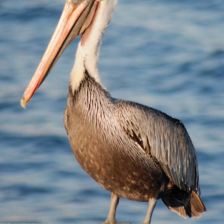}};

\node at (-3,-1.25) {Class-defining feature interpretations (testing image)};
\node at (-10, -2.5) {\includegraphics[height=2cm]{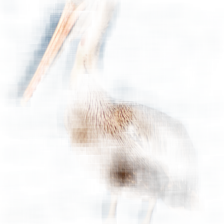}};
\node at (-8, -2.5) {\includegraphics[height=2cm]{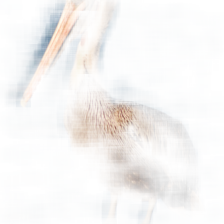}};
\node at (-6, -2.5) {\includegraphics[height=2cm]{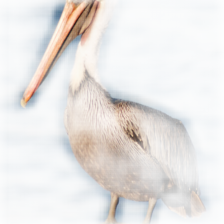}};
\node at (-4, -2.5) {\includegraphics[height=2cm]{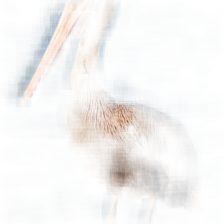}};
\node at (-2, -2.5) {\includegraphics[height=2cm]{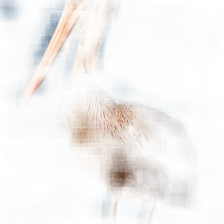}};
\node at (0, -2.5) {\includegraphics[height=2cm]{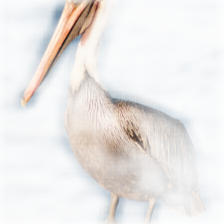}};
\node at (2, -2.5) {\includegraphics[height=2cm]{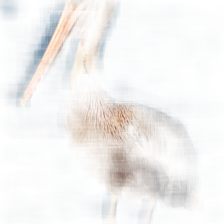}};
\node at (4, -2.5) {\includegraphics[height=2cm]{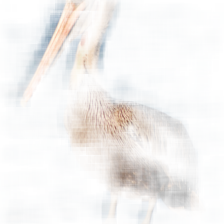}};
\draw[<->, line width=0.5mm] (-10,-3.5) -- (-10,-4);
\draw[<->, line width=0.5mm] (-8,-3.5) -- (-8,-4);
\draw[<->, line width=0.5mm] (-6,-3.5) -- (-6,-4);
\draw[<->, line width=0.5mm] (-4,-3.5) -- (-4,-4);
\draw[<->, line width=0.5mm] (-2,-3.5) -- (-2,-4);
\draw[<->, line width=0.5mm] (0,-3.5)  -- (0,-4);
\draw[<->, line width=0.5mm] (2,-3.5)  -- (2,-4);
\draw[<->, line width=0.5mm] (4,-3.5)  -- (4,-4);

\node at (-3,-6.25) {Class-defining feature interpretations (training images)};
\node at (-10, -5){\includegraphics[height=2cm]{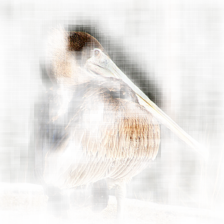}};
\node at (-8, -5){\includegraphics[height=2cm]{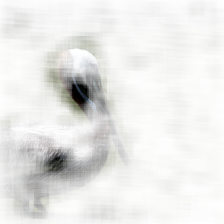}};
\node at (-6, -5){\includegraphics[height=2cm]{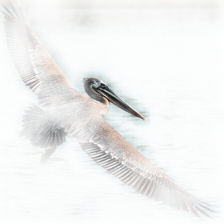}};
\node at (-4, -5){\includegraphics[height=2cm]{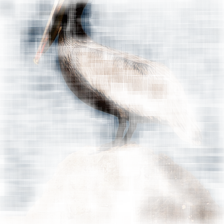}};
\node at (-2, -5){\includegraphics[height=2cm]{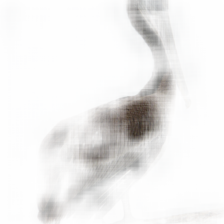}};
\node at (0, -5){\includegraphics[height=2cm]{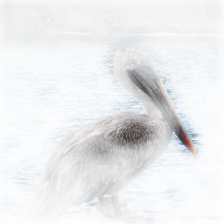}};
\node at (2, -5){\includegraphics[height=2cm]{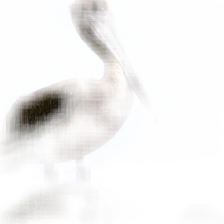}};
\node at (4, -5){\includegraphics[height=2cm]{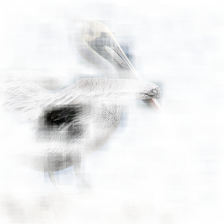}};

\node at (-10,-9.00) {\makecell{Brown\\Pelican}};
\node at (-8,-9.00) {\makecell{Brown\\Pelican}};
\node at (-6,-9.00) {\makecell{Brown\\Pelican}};
\node at (-4,-9.00) {\makecell{Brown\\Pelican}};
\node at (-2,-9.00) {\makecell{Brown\\Pelican}};
\node at (0,-9.00) {\makecell{Brown\\Pelican}};
\node at (2,-9.00) {\makecell{Brown\\Pelican}};
\node at (4,-9.00) {\makecell{Brown\\Pelican}};

\node at (-3,-9.75) {Training data};

\node at (-10, -7.5) {\includegraphics[height=2cm]{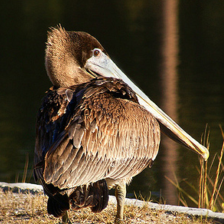}};
\node at (-8, -7.5) {\includegraphics[height=2cm]{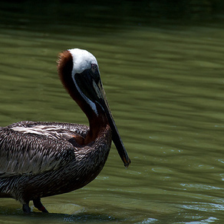}};
\node at (-6, -7.5) {\includegraphics[height=2cm]{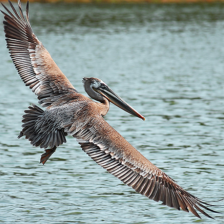}};
\node at (-4, -7.5) {\includegraphics[height=2cm]{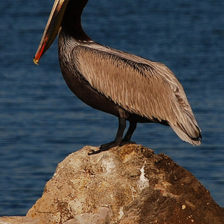}};
\node at (-2, -7.5) {\includegraphics[height=2cm]{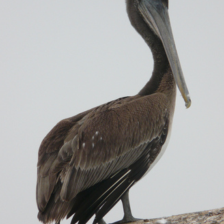}};
\node at (0, -7.5) {\includegraphics[height=2cm]{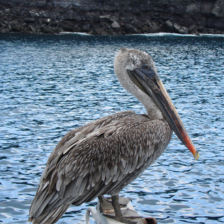}};
\node at (2, -7.5) {\includegraphics[height=2cm]{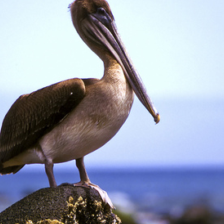}};
\node at (4, -7.5) {\includegraphics[height=2cm]{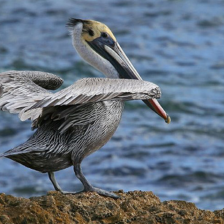}};

\end{tikzpicture}
}
\end{subfigure}

\caption{Additional qualitative results  (CUB-200-2011)}
    \label{qualitative_results_appendix3}
\end{figure}

In Figures \ref{qualitative_results_appendix}, \ref{qualitative_results_appendix2}, \ref{qualitative_results_appendix3}, we show additional qualitative results.

\newpage

\section{Confusion matrices for pseudo-labels}
\label{confusion_matrices_for_pseudolabels_appendix}
In Figure \ref{pseudolabels_vs_real_labels_appendix}, we present the complete confusion matrices for pseudo-labels.
\begin{figure}[h]
    \includegraphics[width=0.22\textwidth]{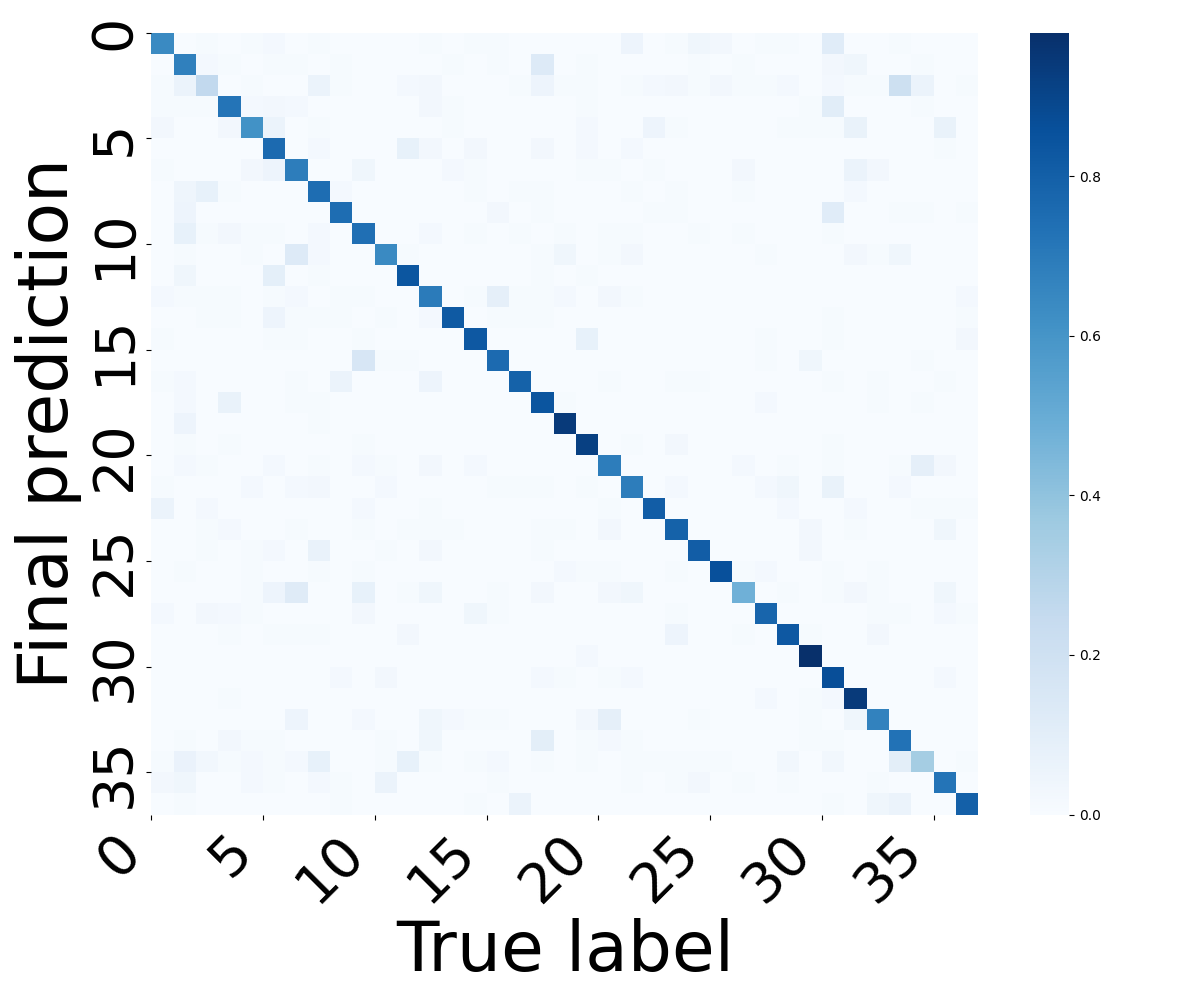}
    \includegraphics[width=0.22\textwidth]{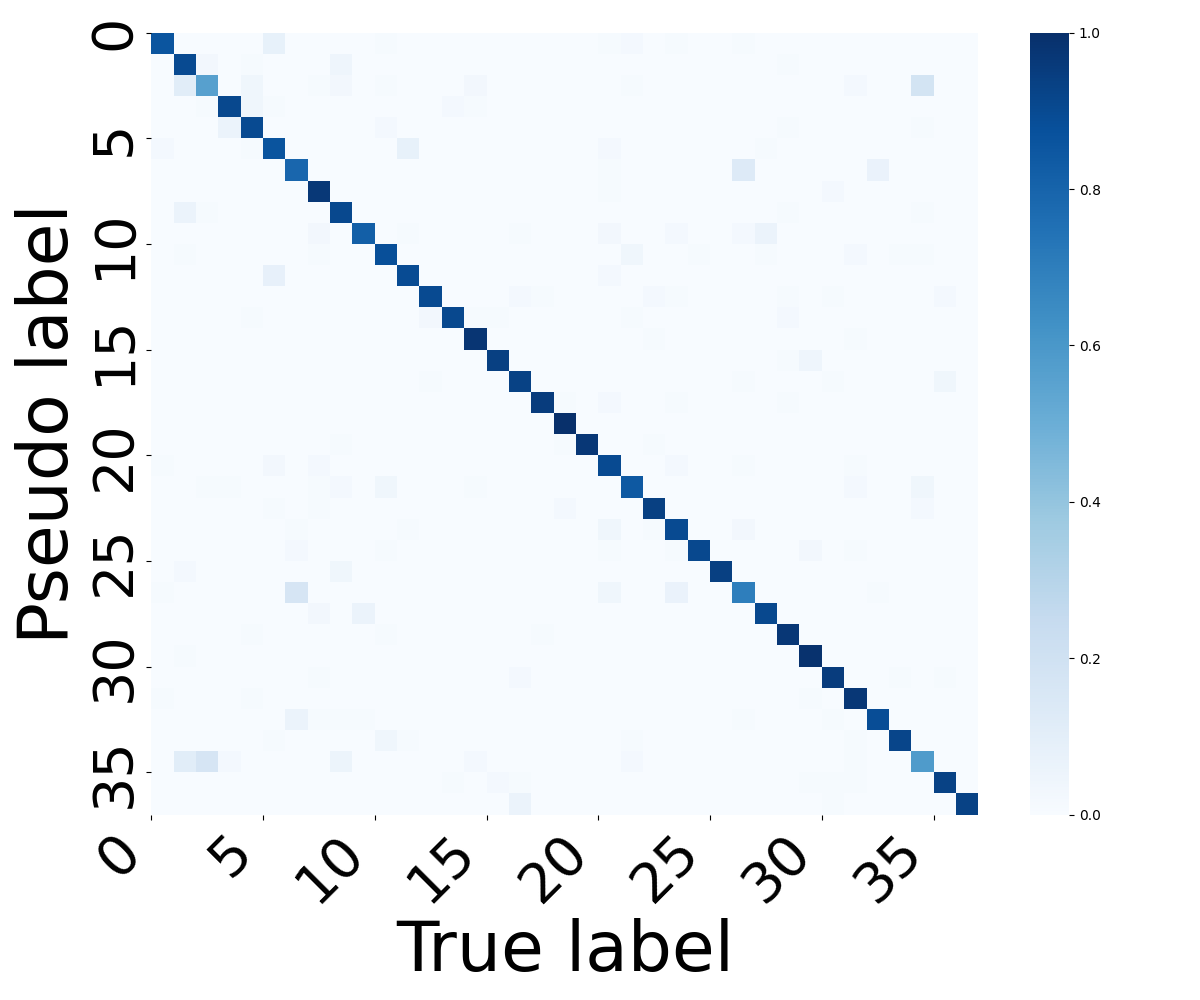}
    \includegraphics[width=0.22\textwidth]{cms/confusion_matrix_3.png}
    \centering
    \caption{Confusion matrices for pseudo-labels: comparison between the ground-truth, final and pseudo-label on Oxford-IIIT Pets dataset}
    \label{pseudolabels_vs_real_labels_appendix}
\end{figure}

\section{Input segmentation}
\label{segmentation_section_appendix}
The images can be segmented according to the dominant feature activated at the pixel level within the input. In Figure \ref{segmentation_results}, we highlight some of the segmentation outputs.

\begin{figure}[h]
    \centering
    \begin{tabular}{c|c} 
        Oxford-IIIT Pets & CUB-200-2011 \\
        \includegraphics[width=0.12\textwidth]{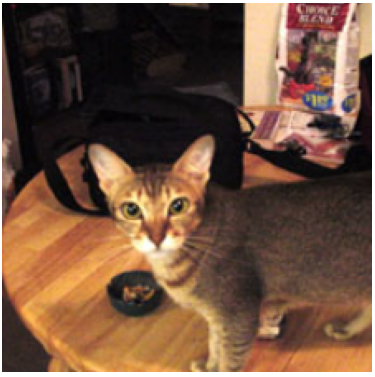}\hspace{0.2cm}\includegraphics[width=0.12\textwidth]{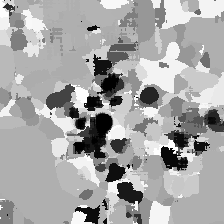}\hspace{0.2cm}\includegraphics[width=0.12\textwidth]{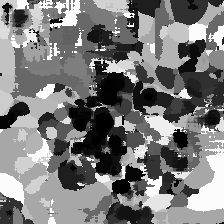} & \includegraphics[width=0.12\textwidth]{figures_cub/1946/test_image.png}\hspace{0.2cm}\includegraphics[width=0.12\textwidth]{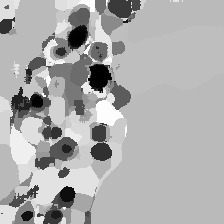}\hspace{0.2cm}\includegraphics[width=0.12\textwidth]{figures_cub/1946/2_seg_image.png} \\
        \includegraphics[width=0.12\textwidth]{figures/698/test_image.png}\hspace{0.2cm}\includegraphics[width=0.12\textwidth]{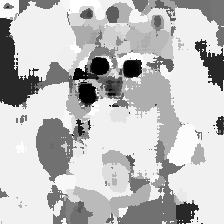}\hspace{0.2cm}\includegraphics[width=0.12\textwidth]{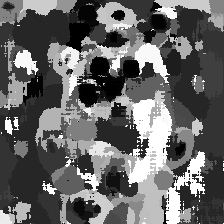} & \includegraphics[width=0.12\textwidth]{figures_cub/2834/test_image.png}\hspace{0.2cm}\includegraphics[width=0.12\textwidth]{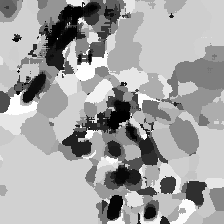}\hspace{0.2cm}\includegraphics[width=0.12\textwidth]{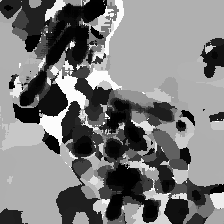} \\
        \includegraphics[width=0.12\textwidth]{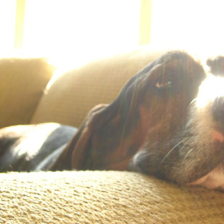}\hspace{0.2cm}\includegraphics[width=0.12\textwidth]{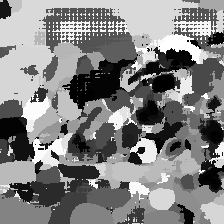}\hspace{0.2cm}\includegraphics[width=0.12\textwidth]{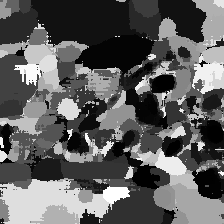} & \includegraphics[width=0.12\textwidth]{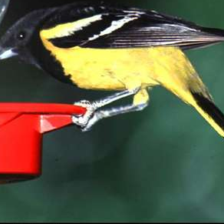}\hspace{0.2cm}\includegraphics[width=0.12\textwidth]{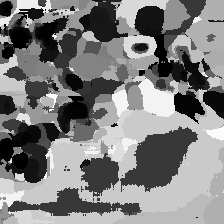}\hspace{0.2cm}\includegraphics[width=0.12\textwidth]{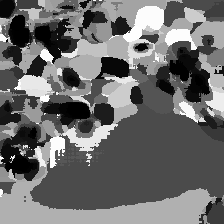}\\
        \includegraphics[width=0.12\textwidth]{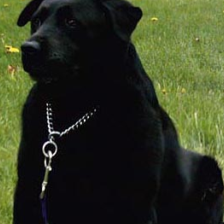}\hspace{0.2cm}\includegraphics[width=0.12\textwidth]{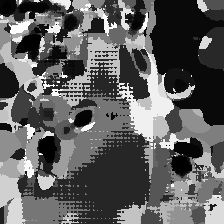}\hspace{0.2cm}\includegraphics[width=0.12\textwidth]{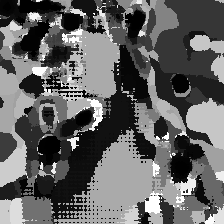} & \includegraphics[width=0.12\textwidth]{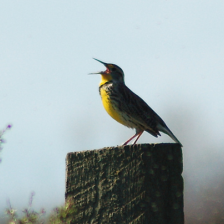}\hspace{0.2cm}\includegraphics[width=0.12\textwidth]{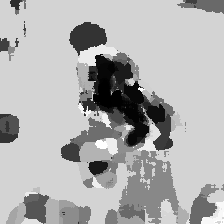}\hspace{0.2cm}\includegraphics[width=0.12\textwidth]{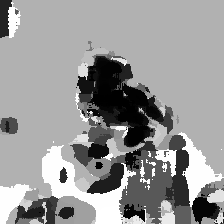}\\
        \includegraphics[width=0.12\textwidth]{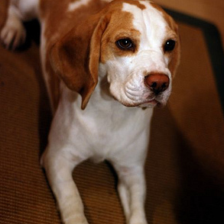}\hspace{0.2cm}\includegraphics[width=0.12\textwidth]{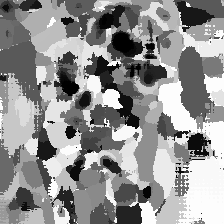}\hspace{0.2cm}\includegraphics[width=0.12\textwidth]{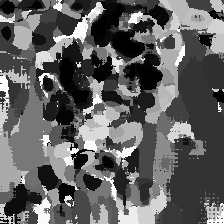} & \includegraphics[width=0.12\textwidth]{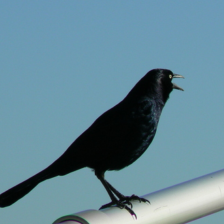}\hspace{0.2cm}\includegraphics[width=0.12\textwidth]{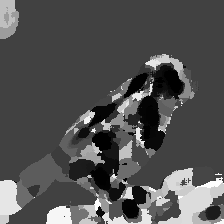}\hspace{0.2cm}\includegraphics[width=0.12\textwidth]{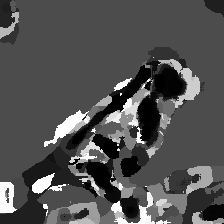}
    \end{tabular}
    \caption{Image segmentation results: testing image, segmentations by leading CDFs, and segmentations by all features}
    \label{segmentation_results}
\end{figure}

\newpage

\section{Broader impacts}
\label{broader_impacts}
 While some of the existing \textit{post hoc} explanation methods can explain the decision making, they do not follow the original decision making process \cite{rudin2019stop}. This, therefore, cannot satisfy the current legal, ethical and policy-making needs. In contrast, by-design methods provide explanations which are causally linked with the decision making process. Such alternative is especially important for safety-critical applications, such as autonomous driving, robotics, medical imagery. 
 
 As finetuning-free learning was not considered the primary goal of this work, it was merely documented and not investigated further. It remains to be seen as to why \methodname\ results in surprisingly good finetuning-free performance.

\section{Limitations}
\label{limitations_section}
Use of the pseudo-labels for preliminary selection of features can be also considered as a limitation, which is common for other works using concept-based interpretations due to the fact that feature selection necessitates pre-selection of the proposal class for subsequent refinement. \cite{tan2024post} describes the similar problem for their \textit{post hoc} analysis method as a feature refinement problem.

\end{document}